\pdfoutput=1
\documentclass{article} 
\usepackage{iclr2024_conference,times}


\usepackage{amsmath,amsfonts,bm}









\def\eqref#1{equation~\ref{#1}}









\def\1{\bm{1}}










\DeclareMathAlphabet{\mathsfit}{\encodingdefault}{\sfdefault}{m}{sl}
\SetMathAlphabet{\mathsfit}{bold}{\encodingdefault}{\sfdefault}{bx}{n}













\usepackage{array}
\usepackage{comment}
\usepackage{thmtools}
\usepackage{thm-restate}
\usepackage{floatrow}
\usepackage[utf8]{inputenc} 
\usepackage[T1]{fontenc}    
\usepackage{hyperref}       
\usepackage{url}            
\usepackage{booktabs}       
\usepackage{amsfonts}       
\usepackage{nicefrac}       
\usepackage{microtype}      
\usepackage{amsmath}
\usepackage[linesnumbered,ruled]{algorithm2e}
\usepackage{wrapfig}
\usepackage{caption}
\usepackage{subcaption}
\usepackage{tikz}
\usetikzlibrary{bayesnet}
\usepackage{cleveref}
\usepackage[multiple]{footmisc}
\usepackage[export]{adjustbox}
\hypersetup{
   colorlinks,
   linkcolor={red},
   citecolor={blue},
   urlcolor={blue}
}
\usepackage{etoolbox}
\usepackage{amssymb}
\usepackage{amsthm}

\title{Out-of-Variable Generalization for \\ Discriminative Models}


\author{Siyuan Guo \thanks{Correspondence to: siyuan.guo@tuebingen.mpg.de}\\
University of Cambridge \& \\
Max Planck Institute for Intelligent Systems\\
Tübingen, Germany \\
\And
Jonas Wildberger \\
Max Planck Institute for Intelligent Systems\\
Tübingen, Germany \\
\AND
Bernhard Schölkopf \\
Max Planck Institute for Intelligent Systems\\
Tübingen, Germany \\
}

%

\iclrfinalcopy 
\begin{document}
\definecolor{darkmidnightblue}{rgb}{0.0, 0.2, 0.4}
\definecolor{burntorange}{rgb}{0.8, 0.33, 0.0}
\crefname{section}{§}{§§}
\newtheorem{theorem}{Theorem} 
\newtheorem{corollary}[theorem]{Corollary}
\newtheorem{lemma}[theorem]{Lemma}
\newtheorem{definition}{Definition}
\newtheorem{principle}{Principle}
\newtheorem*{observation}{Observation}
\newtheorem{hypothesis}{Hypothesis}
\newcommand{\indep}{\perp \!\!\! \perp}
\newcommand\todo[1]{\textcolor{blue}{#1}}
\newcommand{\jonas}[1]{\textcolor{olive}{jonas: #1}}
\newcommand{\siyuan}[1]{\textcolor{blue}{siyuan: #1}}
\newcommand{\bernhard}[1]{\textcolor{red}{~B: #1}}

\newcommand{\rulesep}{\unskip\ \vrule width 2\  }

\maketitle
\begin{abstract}
The ability of an agent to do well in new environments is a critical aspect of intelligence. In machine learning, this ability is known as \textit{strong} or \textit{out-of-distribution} generalization. However, merely considering differences in data distributions is inadequate for fully capturing differences between learning environments. In the present paper, we investigate \textit{out-of-variable} generalization, which pertains to an agent's generalization capabilities concerning environments with variables that were never jointly observed before. This skill closely reflects the process of animate learning: we, too, explore Nature by probing, observing, and measuring proper {\em subsets} of variables at any given time. Mathematically, {out-of-variable} generalization requires the efficient re-use of past marginal information, i.e., information over subsets of previously observed variables. We study this problem, focusing on prediction tasks across environments that contain overlapping, yet distinct, sets of causes. We show that after fitting a classifier, the residual distribution in one environment reveals the partial derivative of the true generating function with respect to the unobserved causal parent in that environment. We leverage this information and propose a method that exhibits non-trivial out-of-variable generalization performance when facing an overlapping, yet distinct, set of causal predictors. \looseness=-1
\end{abstract}

\section{Introduction}
\label{sec:related-work}
Much of modern machine learning can be viewed as large-scale pattern recognition on suitably collected \textit{independent and identically distributed (i.i.d.)} data. Its success builds on generalizing from one observation to the next, sampled from the same distribution. 
Animate intelligence differs from this in its ability to generalize from one {\em problem} to another.
The machine learning community studies the latter under the term \textit{out-of-distribution} (OOD) generalization \citep{shen2021towards,parascandolo2021learning, ahuja2021invariance, krueger2021out, zhang2021deep,zhang2021can,scholkopf2022causality}, 
where training and test data differ in their distributions. However, differences in distributions do not fully capture differences in environments. In the present work, we investigate generalization across environments where different sets of variables are observed, referring to the problem as \textit{\textbf{out-of-variable (OOV)}} generalization. While in practice we would expect that many real-world situations exhibit both aspects of OOD and OOV, we note that the OOV problem can occur even if there is no shift in the underlying distribution. In the present paper, we will focus on this setting. \looseness=-1

Out-of-variable generalization aims to transfer knowledge learnt from a set of source environments to a target environment that contains variables never jointly present in any of the sources, or even not present at all. 
OOV is a ubiquitous problem in inference. Scientific discovery synthesizes information 
and generalizes both out-of-distribution and {out-of-variable} \citep{Seneviratne2018MergingHC, heyfourth}. Medicine is a field where machine learning is thought to have great potential since it can learn from millions of patients while a doctor may only see a few thousand during their lifetime. However, we face strong limitations in guaranteeing dataset consistency. To begin with, patients have unique circumstances, and some diseases/symptoms are rare. More generally, medical datasets come with different variable sets ---- diagnostic measurements greatly vary across patients (serum, various forms of imaging, genomics, proteomics, immunoassays, etc.). A good doctor, however, will be able to generalize across patients even if the measured variables are not identical.  In practice, data scientists end up using imputation or simply ignoring rarely-measured features. To realize the potential of AI in medicine, we need to understand the OOV problem.



To provide context, we briefly discuss pertinent research threads. 

\textbf{Missing data} \cite{rubin1976inference} refers to when covariates are missing for individual data points. Most approaches \citep{donner1982relative, kim1977treatment} either omit data that contain missing values or perform imputation. Our problem differs in that some variables are missing in entire environments. 

{\bf Transfer learning} 
studies how to re-use previous knowledge for future tasks. Recent work focused on transferring re-usable features  \citep{long2015learning, oquab2014learning, tzeng2015simultaneous} or model parameters \citep{dodge2020fine, sermanet2013overfeat, hoffman2014lsda, cortes2019adaptation,NEURIPS2022_2f5acc92} of discriminative models. Deep learning based approaches \citep{Meyerson2020TheTO, reed2022a} embed variable relationships as proximity in the latent spaces. Our work presents a theoretical study on OOV generalization showing that without additional assumptions, the discriminative OOV problem is not solvable: marginal consistency between source and target discriminative models does not uniquely determine a solution. \looseness=-1

\textbf{Marginal problems} in the statistical \citep{Vorob1962,sklar1996random} and causal literature \citep{Mejia2021, Gresele2022,Janzing2018, janzing2010causal, evans2021parameterizing, robins1999association}, on the other hand, study how to merge marginal information from different sources. Concrete methods may involve searching for a joint distribution that is consistent with marginal observations. The elegant work of \cite{Mejia2021} uses the maximum entropy principle to infer joint distributions compatible with observed marginal datasets; \cite{Gresele2022} study the existence of consistent causal models; \cite{Janzing2018} aims to learn useful causal models that can predict properties of previously unobserved variable sets. Note that inferring the joint distribution for prediction tasks may be inefficient, in line with Vapnik's principle \citep{vapnik1999nature}: given some task, one should avoid solving a more general problem as an intermediate step. 
 Our work takes a different approach,
showing that learning from a residual error distribution is sufficient to achieve identifiability in nontrivial discriminative OOV scenarios.

\textbf{Causality} 
 has been argued to be related to the issue of generalization across domains \citep{ZhaGonSch15,SchJanPetZha2011,arjovsky2019invariant,pearl2022external}. Distribution shifts between domains can be modelled as sparse causal mechanism shifts \citep{bengio2019meta,scholkopf2022causality,perry2022causal} 
, and the correct causal structure may help efficient modular adaptation \citep{ParKilRojSch18,RIMs}. Other work considers domain differences as shifts in spurious correlations or aims to learn invariant causal information, robust across environments \citep{SchJanPetZha2011,Peters2016, rojas2018invariant, HeinzeDeml2018, arjovsky2019invariant,jiang2022invariant,krueger2021out, parascandolo2021learning,ahuja2021invariance, lu2021invariant, HeinzeDeml2018,pfister2019invariant, rojas2018invariant}. 
Causality approaches also include the transfer of causal effects across different experimental conditions \citep{pearl2022external, bareinboim2013general,
bareinboim2016causal, degtiar2023review}.  
In the present paper, we highlight another connection between causality and generalization, where domain differences are due to distinct sets of variables contained within them, and causal assumptions allow us to generalize even to variables that we have never seen during training.


\textit{Out-of-variable} generalization studies the efficient re-use of marginal observations. We do not {\em solve} this problem, neither do we present a robust algorithm for real-world settings. We do present a proof-of-concept, proposing a setting and a predictor provably capable of leveraging additional information beyond what is typically used by discriminative models. We do so without the need for inferring joint distributions. Our main contributions are:
\begin{itemize}
    \item We contextualize (\cref{sec:related-work}) and study {OOV} generalization (\cref{sec:oov-generalization}).
    \item We investigate challenges for common approaches (e.g., transferring reusable features or model parameters) to solve discriminative {OOV} problems (\cref{sec:theorems}). We show (Theorem \ref{thm:impossibility}) marginal consistency condition alone does not permit identification of the target predictive function for common {OOV} scenarios.  
    \item We study the identification problem in OOV scenarios when source and target covariates have dependent (\cref{sec:dependent_covariates}) and independent (\cref{sec:independent_covariates}) structures. We find that the moments of the error distribution in the source domain reveal the partial derivative of the true generating function with respect to the unobserved causal parents (\cref{sec:proposed-method}).
    \item We then propose an {OOV} predictor and evaluate its performance experimentally (\cref{sec:experiments}), showing that our approach achieves a non-trivial degree of OOV transfer. 
\end{itemize}
Fig.~\ref{fig:overview} provides a toy example of our problem. At first glance, it would seem all but impossible to have any transfer from the source environment (blue box) to the target environment (orange box) about information on an unobserved variable in the source. The goal of the present paper is to show that under certain causal and functional assumptions, there is a previously overlooked source of information in this OOV setting,  making it possible after all.

\begin{figure}
    \centering
    \begin{minipage}{.6\textwidth}
    \centering
     \resizebox{.3\columnwidth}{!}{%
         \begin{tikzpicture}
            \node[latent] at (0, 0) (x1) {$X_1$};
            \node[latent] [below=.5cm of x1] (x2) {$X_2$};
            \node[latent] [below=.5cm of x2] (x3) {$X_3$};
            \node[latent] [right=of x2] (y) {$Y$};
            \path[->] (x1) edge (y);
a            \path[->] (x2) edge (y);
            \path[->] (x3) edge (y);
            \plate [draw=darkmidnightblue, line width = .6mm, inner sep = 0.2cm] {source} {(x1)(x2)(y)}{};
            \plate [draw=burntorange, line width = .6mm, inner sep = 0.2cm, xshift = -.1cm] {target} {(x3)(x2)}{};
         \end{tikzpicture}}
         \subcaption{}
         \label{fig:transfer-learning-partial-parents}
    \end{minipage}%
    \begin{minipage}{0.8\textwidth}
    \hspace{-5em}
        \begin{subfigure}[b]{0.25\textwidth}
             \centering
             \caption{Proposed}
             \includegraphics[width = \textwidth, height = 1.5cm]{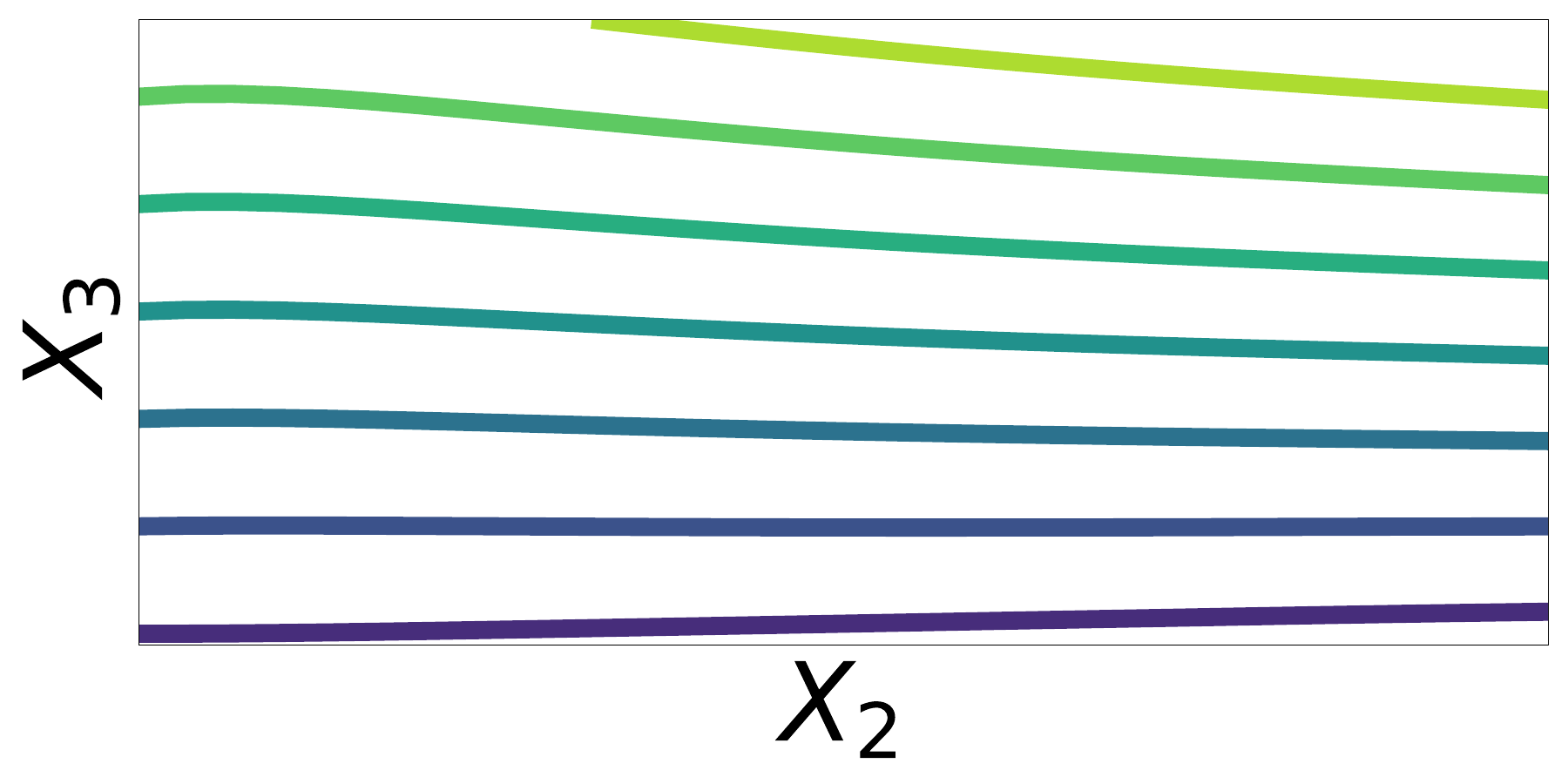}
             \end{subfigure}
              \hspace{0em}
             \begin{subfigure}[b]{0.25\textwidth}
            \caption{Oracle}
             \includegraphics[width = \textwidth, height = 1.5cm]{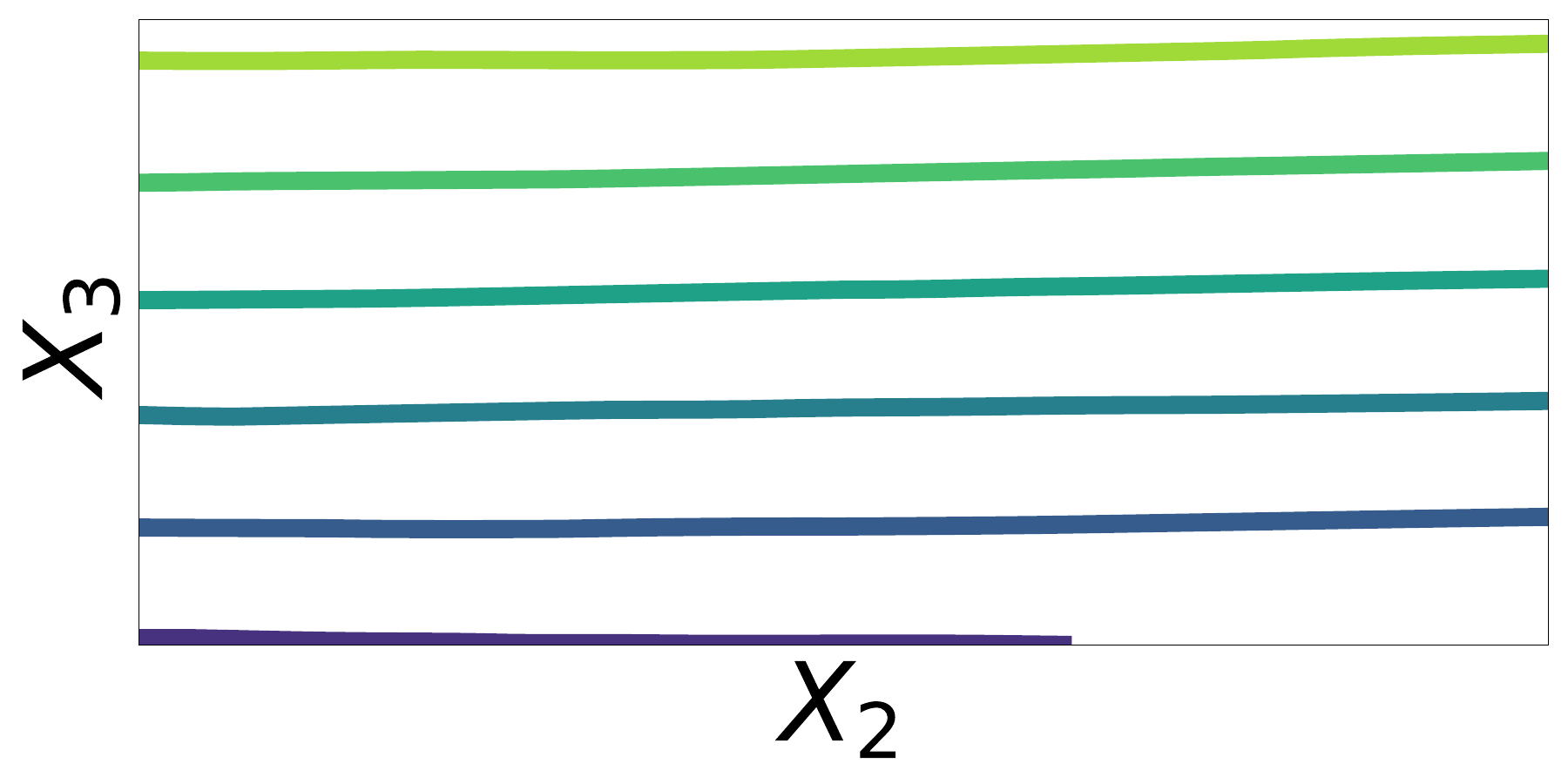}
             \end{subfigure}
              \vfill
              \hspace{-5em}
             \begin{subfigure}[b]{0.25\textwidth}
             \centering
             \includegraphics[width = \textwidth, height = 1.5cm]{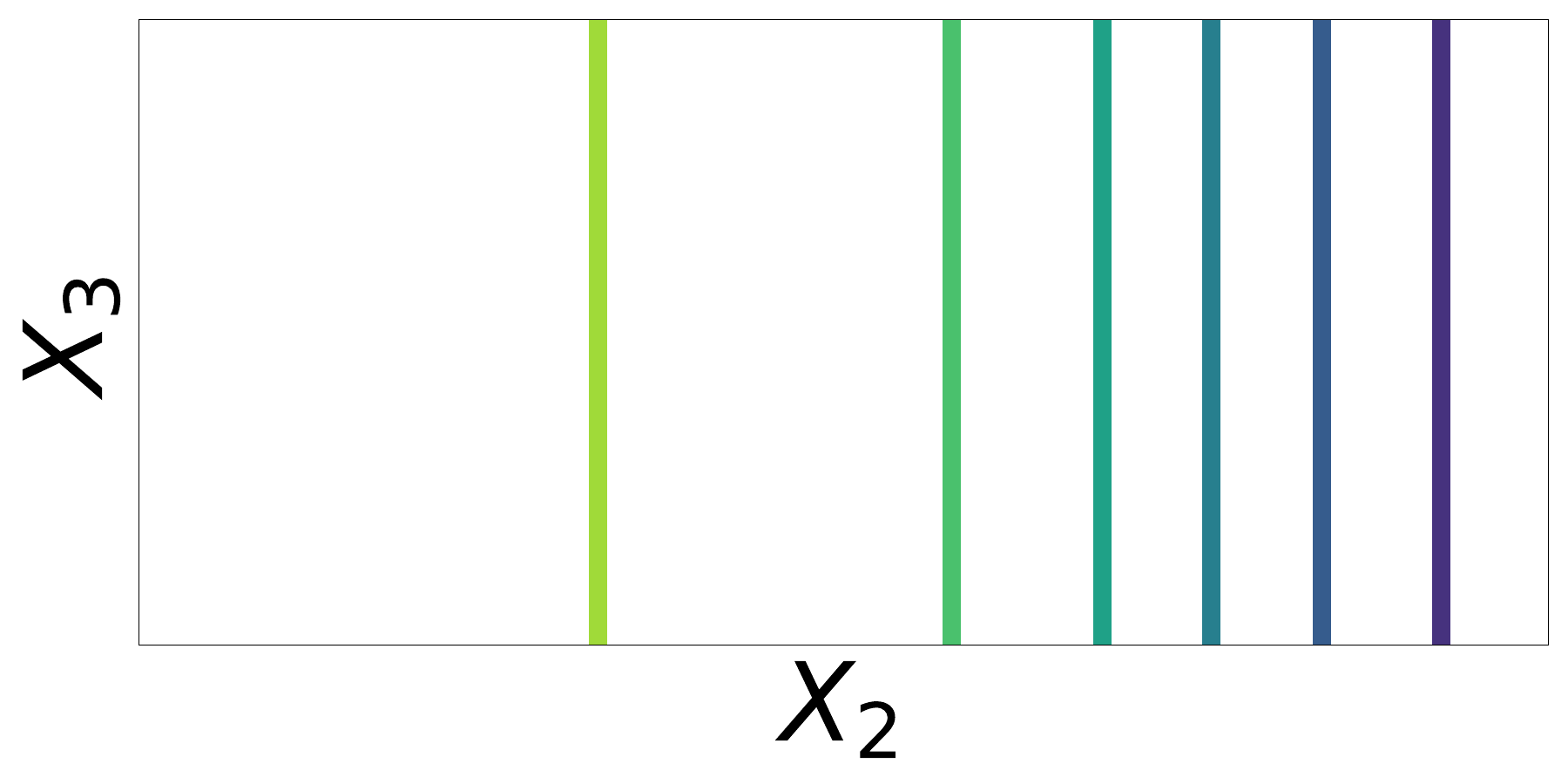}
                         \caption{Marginal}
             \end{subfigure}
            \hspace{0em}
             \begin{subfigure}[b]{0.25\textwidth}
             \centering
             \includegraphics[width = \textwidth, height = 1.5cm]{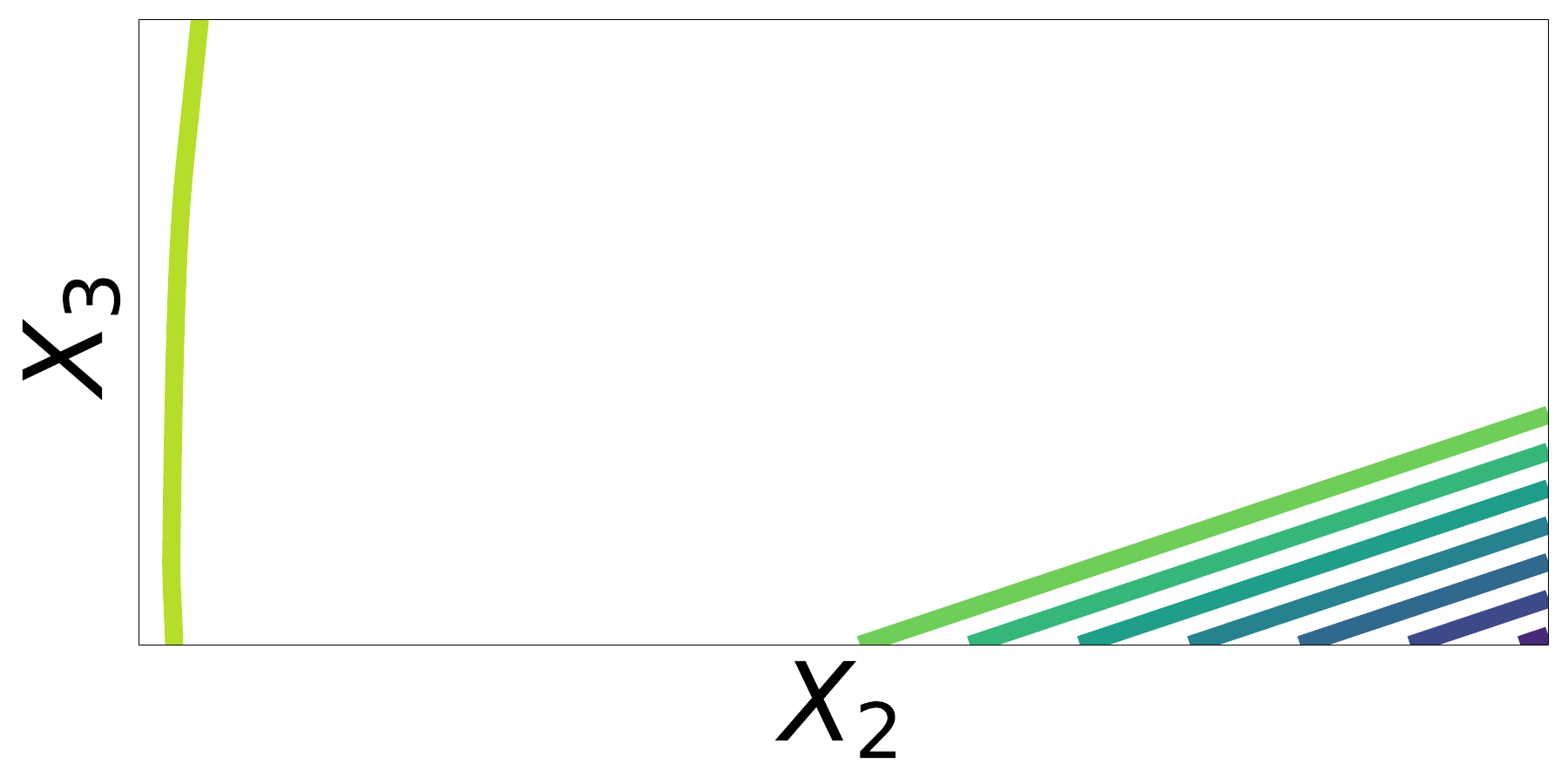}
                         \caption{MeanImputed}
             \end{subfigure}
            
    \end{minipage}
    \caption{Example of an OOV scenario:  (a) the blue box includes observed variables in the source domain, and the orange box those in the target domain. A directed edge represents a causal relationship. With $Y$ not observed in the target domain, the goal is to predict $Y$ in the target domain using the source domain. (b)-(e) shows an example of contour lines of various methods' prediction on $\mathbb{E}[Y \mid X_2, X_3]$. Our proposed predictor (b) results in a close match with the true expectation (c), an oracle solution trained as if we have sufficient data and observe all variables of interest. In contrast, marginal and mean imputed predictors (d, e) deviate far from the true expectation (for details, cf.\ \cref{sec:experiments}).  }
    \label{fig:overview}
\end{figure}

\section{Out-of-Variable Generalization}
\label{sec:oov-generalization}
Denote by $X$ a random variable with values $x$, $P$ a probability distribution with density $p$. Consider an acyclic structural causal model (SCM) $\mathcal{M}$ consisting of a collection of random variables and structural assignments \citep{pearl2009causality}
\begin{equation}
X_i := f_i(\textbf{PA}_i, U_i), ~ i = 1, \ldots, n,
\end{equation}
where $\textbf{PA}_i$ are the parents or direct causes of $X_i$ and $U_i$ are jointly independent noise variables. Given an SCM $\mathcal{M}$, one can define its corresponding directed acyclic graph (DAG) where the incoming edges for each node are given by its parent set. A joint distribution generated from some SCM $\mathcal{M}$ with DAG $\mathcal{G}$ allows the \textbf{Markov factorization}
\begin{equation}
\label{eq:markov-factorization}
    p(x_1, \ldots, x_n) = \prod_{i=1}^n p(x_i \mid \textbf{pa}_i^\mathcal{G})
\end{equation}
where $\textbf{pa}_i^\mathcal{G}$ are parents of $X_i$ in $\mathcal{G}$. The factors (``mechanisms'') in (\ref{eq:markov-factorization}) are postulated to be independent: 
\begin{principle}[Independent Causal Mechanisms (ICM) \citep{PetJanSch17}]
\label{principle:icm} A change in one mechanism $p(x_i \mid \textbf{pa}_i^\mathcal{G})$ does not inform \citep{guo2022causal, janzing2010causal} or influence \citep{SchJanPetZha2011} any of the other mechanisms $p(x_j \mid \textbf{pa}_j^\mathcal{G})(i\neq j)$. 
\end{principle}
 
Before defining \textit{OOV} generalization, we begin by motivating the problem. 
Probabilistic representations (such as the Markov factorization Eq.~\ref{eq:markov-factorization}) have been argued to offer advantages for probabilistic inference \citep{Koller2010} and interpretability. We highlight that in addition, \textit{the Markov factorization frees us from the need of observing all variables of interest at the same time:}

\begin{observation}[Estimating the joint via causal modules]
Suppose that we have knowledge of the causal DAG and would like to estimate the joint density $p$. Provided that for each variable $X_i$, we observe an environment containing $X_i$ and its causal parents, we can recover the joint density by multiplying (according to Eq.~\ref{eq:markov-factorization}) the conditionals $p(x_i \mid \textbf{pa}_i^\mathcal{G})$ estimated separately in the environments. 
\end{observation}
This phenomenon also occurs in undirected probabilistic graphical models
, where the joint density $p(x_1, \dots, x_n) = \frac{1}{Z} \prod_{c \in \text{cliques}} \Psi_c(x_c)$ is recoverable given potentials learnt from environments that contain the variables appearing in each clique.  
The above are the simplest cases of OOV generalization, yet they already illustrate that causal assumptions can help. We will study a more subtle case below. 

To this end, we model an environment $\mathcal{E} = (\mathcal D, \mathcal T)$ as a domain $\mathcal{D}$ and a task $\mathcal{T}$. The domain contains a variable space $\mathcal{X}:= (X_1, X_2, \dots)$ and its joint probability distribution $P(\mathcal{X})$. Given a domain, a task contains a target variable space $\mathcal{Y}:= Y$, and a predictor $f:\mathcal{X}\to \mathcal{Y}$. To differentiate between components belonging to the source and target environments, subscripts $s$ and $t$ are used.

\begin{definition}[OOV Generalization]
\label{def:oov-generalization}
    OOV uncertainty arises when the variable space of the target environment is not contained in any of the variable spaces of the source environments, i.e., $\forall s, \{\mathcal{X}_t, \mathcal{Y}_t\} \not \subseteq \{ \mathcal{X}_s, \mathcal{Y}_s\}$. If a method for estimating a quantity in the target environment (e.g., a predictor  $f_t$) improves by utilizing data from the source environments, we say it generalizes OOV.  
\end{definition}

While there is nothing causal about {OOV} generalization, we use SCMs since it turns out that they allow the formulation of assumptions and methods that provably exhibit OOV generalization.

\section{Residual Generalization under Causal Assumptions}
\label{sec:transfer-beyond-fine-tuning} 
\subsection{Problem Formulation}
\label{sec:problem-setup}
For simplicity, we consider a univariate setting, referring to  Appendix \ref{sec:problem_formulation_multivariate} for a multivariate extension. Consider an SCM with additive noise \citep{Hoyer2008}
\begin{align}
\label{eqref:SCM-partial}
\begin{split}
    &Y := \phi(X_1, X_2, X_3) + \epsilon \\
\end{split}
\end{align}
with a function $\phi$, jointly independent causes $X_i$, and $\epsilon \sim \mathcal{N}(0, \sigma^2)$. Assume that we do not have access to an environment jointly containing all variables $(X_1, X_2, X_3, Y)$. Instead, we have (Fig.~\ref{fig:transfer-learning-partial-parents}):
\begin{itemize}
    \item A source environment with jointly observed $(X_1, X_2, Y)$, and
    \item A target environment with jointly observed $(X_2, X_3)$, and unobserved $Y$. 
\end{itemize}
Our goal is to predict $Y$ given $X_2, X_3$. 

 This OOV scenario posits two challenges: without joint observations of $(X_2, X_3, Y)$, we cannot train or fine-tune a discriminative model in the target environment; further, due to the independence among the covariates, it is impossible to infer $X_3$ from the covariates observed in the source environment. Fig.~\ref{fig:marginal-to-marginal} shows a visualization of the problem.

To ground the problem in the real world, consider two medical labs collecting different sets of variables. Lab A collects $X_{1}= $ lifestyle factors and $X_2 = $ blood test; Lab B, in addition to $X_2$, collects $X_{3} = $ genomics. 
Lab A is hospital-based and can measure diseases $Y$, whereas B is a research lab. 
The OOV problem asks: given a model trained to predict $Y$ on Lab A’s data, how should Lab B use this model for its own dataset that differs in the set of input variables? \looseness=-1

\subsection{Challenges in Discriminative OOV Generalization}
\label{sec:theorems}
Transfer learning often transfers reusable features or model parameters of a discriminative model fitted in the source environment. This approach has inherent limitations when it comes to {OOV} scenarios: lacking the outcome variable $Y$, we cannot fine-tune in the target environment; further the common features, in our case, are the common variables $X_2$ shared between environments. A naive approach is then to predict the target sample using the model restricted to $X_2$, i.e., the {\em marginal predictor}, cf.\ our experiments (\cref{sec:experiments}). However, such a predictor yields constant prediction irrespective of changing $X_3$.

To leverage the shared variables further, we study the properties that the optimal target predictor is expected to satisfy to restrict the set of potential target predictors. To see this concretely: suppose data is sufficient and one can observe all variables of interest, training discriminative models on each environment yields the optimal predictive functions that minimize the mean squared error loss 
\begin{align}
    & f_s(x_1, x_2)= \mathbb{E}_{X_3}[Y \mid x_1, x_2] \\
    & f_t(x_2, x_3) = \mathbb{E}_{X_1}[Y \mid x_2, x_3]
\end{align}

for the source and target environment, respectively. With the discriminative model $f_s$ fitted in the source environment, its residual distribution is the distribution of differences between the observed value and the prediction, $Y-f_s(X_1, X_2) \mid X_1, X_2$. Note that the optimal predictive functions $f_s, f_t$ automatically satisfy the \textbf{marginal consistency} condition (see Appendix \ref{sec:marginal-consistency}): for any $x_2$, we have
\begin{align}
\label{eqref:joint-to-marginal}
     \mathbb{E}_{X_1}[f_s(X_1, x_2)] = \mathbb{E}_{X_3}[f_t(x_2, X_3)]
\end{align}

Suppose we have trained the optimal predictor in the source environment. The marginal consistency condition (\ref{eqref:joint-to-marginal}) enforcing consistency over the shared variables, then restricts the solution space for the predictor in the target environment. However, Theorem \ref{thm:impossibility} shows that this restriction does not uniquely determine the target predictor, i.e., it does not permit {\em identification} of the optimal predictive function in the target environment for all the scenarios shown in Fig.~\ref{fig:Impossibility Scenarios}. See Appendix \ref{sec:proof-impossibility} for the multivariate version, and proofs.

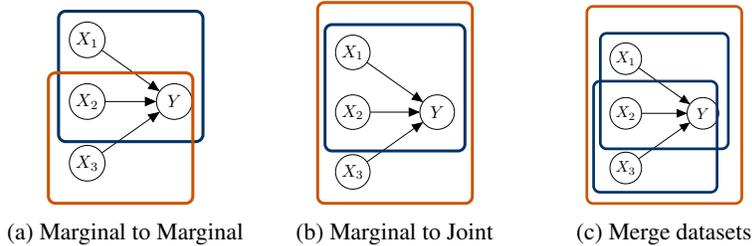
\begin{figure}
     \begin{subfigure}{0.25\textwidth}
         \centering
             \resizebox{.6\columnwidth}{!}{%
         \begin{tikzpicture}
            \node[latent] at (0, 0) (x1) {$X_1$};
            \node[latent] [below=.5cm of x1] (x2) {$X_2$};
            \node[latent] [below=.5cm of x2] (x3) {$X_3$};
            \node[latent] [right=of x2] (y) {$Y$};
            \path[->] (x1) edge (y);
a            \path[->] (x2) edge (y);
            \path[->] (x3) edge (y);
            \plate [draw=darkmidnightblue, line width = .6mm, inner sep = 0.2cm] {source} {(x1)(x2)(y)}{};
            \plate [draw=burntorange, line width = .6mm, inner sep = 0.2cm, xshift = -.2cm] {target} {(x3)(x2)(y)}{};
         \end{tikzpicture}}
         \subcaption{Marginal to Marginal}
         \label{fig:marginal-to-marginal}
     \end{subfigure}
     \begin{subfigure}{0.25\textwidth}
         \centering
             \resizebox{.6\columnwidth}{!}{%
         \begin{tikzpicture}
            \node[latent] at (0, 0) (x1) {$X_1$};
            \node[latent] [below=.5cm of x1] (x2) {$X_2$};
            \node[latent] [below=.5cm of x2] (x3) {$X_3$};
            \node[latent] [right=of x2] (y) {$Y$};
            \path[->] (x1) edge (y);
            \path[->] (x2) edge (y);
            \path[->] (x3) edge (y);
            \plate [draw=darkmidnightblue, line width = .6mm, inner sep = 0.2cm] {source} {(x1)(x2)(y)}{};
            \plate [draw=burntorange, line width = .6mm, inner sep = 0.35cm, yshift = 0.3cm] {target} {(x1)(x3)(x2)(y)}{};
         \end{tikzpicture}}
         \caption{Marginal to Joint}
         \label{fig:marginal-to-joint}
     \end{subfigure}
      \begin{subfigure}{0.25\textwidth}
         \centering
             \resizebox{.6\columnwidth}{!}{%
         \begin{tikzpicture}
            \node[latent] at (0, 0) (x1) {$X_1$};
            \node[latent] [below=.5cm of x1] (x2) {$X_2$};
            \node[latent] [below=.5cm of x2] (x3) {$X_3$};
            \node[latent] [right=of x2] (y) {$Y$};
            \path[->] (x1) edge (y);
            \path[->] (x2) edge (y);
            \path[->] (x3) edge (y);
            \plate [draw=darkmidnightblue, line width = .6mm, inner sep = 0.2cm] {source} {(x1)(x2)(y)}{};
            \plate [draw=darkmidnightblue, line width = .6mm, inner sep = 0.15cm, xshift = -.2cm, yshift = .2cm] {target} {(x3)(x2)(y)}{};
            \plate [draw=burntorange, line width = .6mm, inner sep = 0.5cm, yshift = 0.3cm] {target} {(x1)(x3)(x2)(y)}{};
         \end{tikzpicture}}
         \caption{Merge datasets}
         \label{fig:merge-datasets}
     \end{subfigure}

    \caption{Examples of OOV scenarios where marginal consistency condition alone (\ref{eqref:joint-to-marginal}) does not permit the identification of the optimal predictive function in the corresponding target domain.}
    \label{fig:Impossibility Scenarios}
\end{figure}

\begin{restatable}{theorem}{impossibility}
\label{thm:impossibility}
Consider the OOV scenarios in Fig.~\ref{fig:Impossibility Scenarios}, each governed by the SCM described in \cref{sec:problem-setup}. Suppose that the variables considered in Fig.~\ref{fig:marginal-to-marginal} and Fig.~\ref{fig:marginal-to-joint} are real-valued and the variables $X_1$ and $X_3$ in Fig.~\ref{fig:merge-datasets} are binary. We assume that for all $i$, the marginal density $p_i(x_i)$ is known, and denote its support set as $S_{i} := \{x \in \mathbb{R} \mid p_i(x) > 0 \}$. Suppose that for all $i$ there exist two distinct points $x, x' \in S_i$. Then, for any pair $f_s, f_t$ satisfying marginal consistency (\ref{eqref:joint-to-marginal}) and for any $R>0$, there exists another function $f_t'$ with $\Vert f_t - f_t' \Vert_2 \geq R$ that also satisfies marginal consistency. 
\end{restatable}

\subsection{Identification in OOV Generalization}

We now study when identifiability of the optimal target predictive function can be achieved. To start with we consider a different setting than our setup (Fig. \ref{fig:overview}), where covariates between source and target environments are independent and causes of the outcome variable contained in the target environment. \looseness=-1

\subsubsection{With dependent covariates}
\label{sec:dependent_covariates}

\begin{restatable}{theorem}{possibility}
\label{thm:possibility}
    Consider a target variable $Y$ and its direct cause $\text{PA}_Y$. 
    Suppose that we observe: 
    \begin{itemize}
        \item source environment contains variables $(Z, Y)$; training a discriminative model on this environment yields a function  $f_{s}(z) = \mathbb{E}[Y\mid Z]$,
        \item target environment contains variable $\text{PA}_Y$
    \end{itemize}
    Suppose $Y:= \phi(\text{PA}_Y) + \epsilon_Y$, $Z=g(\text{PA}_Y) + \epsilon_Z$ where $g$ is known and invertible with $\phi, g^{-1}$ uniformly continuous. Then in the limit of $\mathbb{E}[|\epsilon_Z|] \to 0$, the composition of the discriminative models in source environments also approaches the optimal predictor, i.e., $\forall \text{pa}_Y: f_{s} \circ g(\text{pa}_Y) \to \phi(\text{pa}_Y)$. 
\end{restatable}
Appendix \ref{sec:proof-possibility} details its multivariate statement and proof and Fig.~\ref{fig:transfer-learning-multi-sources} in Appendix shows an example of such a scenario. Informally, Theorem \ref{thm:possibility} states that one can identify the optimal target predictive function from the learnt source function in our setup if the dependence structure between the source and target covariates is known and satisfies the above assumptions. However, in real-world applications, the dependence structure between the source and target covariates may not be known or even exist. To further understand this OOV problem, we next study a more challenging scenario when all covariates are independent from each other, and demonstrate a seemingly surprising result, that under certain assumptions, the optimal target predictive function is identifiable without the knowledge of the dependence structure among covariates.

\subsubsection{With independent covariates}
\label{sec:independent_covariates}

With theoretical results on the limitations of current approaches in transferring with discriminative model for OOV scenarios, we present a practical method for the base case illustrated in Fig. \ref{fig:transfer-learning-partial-parents} and detail its underlying assumptions. 

\textbf{Simple Additive Model} One solution to tackle the problem in Fig.~\ref{fig:transfer-learning-partial-parents} is to train separate discriminative models for each observed variable. For example, given the source environment, we learn function mappings on $(X_1, Y)$ and $(X_2, Y)$ as $f_1, f_2$. When facing a different set of variables, e.g., $(X_2, X_3)$, we could directly re-use the learnt $f_2$. With additional collection of $Y$ in the target domain, we then train a model on $(X_3, Y)$. The method offers a degree of compositional flexibility and circumvents the need to jointly observe variables of interest, e.g., $(X_2, X_3, Y)$. However, such a method first requires the collection of variable $Y$ in the target domain and assumes that the generating function of $Y$ has only linear relationships with its causes (i.e., there is no interaction term like $X_iX_j, i\neq j$). A detailed description of the model and its underlying assumptions can be found in Appendix \ref{sec:naive-no-interaction}.  

Below, we propose a method to transfer in {OOV} scenarios that 1) does not require us to observe $Y$ in the target domain, and 2) relaxes the linearity assumption. Note the main idea is general to work for all scenarios in Fig.~\ref{fig:Impossibility Scenarios}. We illustrate our method via an example (\cref{sec:motivating-example}) with details in \cref{sec:proposed-method}.

\subsubsection{Motivating Example}
\label{sec:motivating-example}
Consider the problem described in \cref{sec:problem-setup} in the case where $\phi$ is a polynomial:
\begin{equation}\label{eq:fct}
    Y := \alpha_1X_1 + \alpha_2X_2 + \alpha_3X_3 + \alpha_4X_1X_2 + \alpha_5X_1X_3 + \alpha_6X_2X_3 + \alpha_7X_1X_2X_3 +  \epsilon
\end{equation}
Let $X_i$ have mean $\mu_i$, variance $\sigma_i$ for all $i$. Given sufficient data and the observation of variable $Y$ in the target environment, we train discriminative models in each environment, yielding the optimal predictive functions that minimize the mean squared error as:
\begin{align}
\label{eqref:poly-optimal-predictive}
\begin{split}
    f_s(x_1, x_2) &= (\alpha_3\mu_3) + (\alpha_1 + \alpha_5 \mu_3) x_1 + (\alpha_2 + \alpha_6 \mu_3) x_2 + (\alpha_4 + \alpha_7 \mu_3) x_1 x_2 \\
    f_t(x_2, x_3) &= (\alpha_1 \mu_1)  + (\alpha_3 + \alpha_5 \mu_1) x_3 + (\alpha_2 + \alpha_4 \mu_1) x_2 + (\alpha_6 + \alpha_7 \mu_1) x_2 x_3
\end{split}
\end{align}


We first illustrate fine-tuning, in this example, cannot identify target predictive function. Note coefficients $\{\alpha_i\}$ are model parameters. We observe the coefficients for the common term $x_2$ share some constituents between $f_s$ and $f_t$ in (\ref{eqref:poly-optimal-predictive}). One can thus expect, during fine-tuning, the coefficients may adapt quickly. However, it is clear that one cannot uniquely determine the coefficients of $f_t$ without observing $Y$ from the target environment, since the system of equations is under-determined with eight unknown coefficients and four estimated values -- even in the above polynomial case. 

\textbf{`No noise' regime} First assume that there is no noise, i.e.\ $\epsilon = 0$. Although we do not observe the cause $X_3$, we nevertheless have information about it in the source environment: The unobserved variable act as a noise term, and the residual distribution in the source environment carries a footprint of it. We will see below that subject to suitable assumptions, this idea carries over to the noisy case.

\textbf{`With noise' regime} Now consider additional additive noise. We will see that the idea outlined above carries over under suitable assumptions. The third moment from the residual distribution in the source environment takes the following form:
\begin{align}
\label{eq:skewexample}
     \mathbb{E}\big[(Y - f_s(x_1, x_2))^3 \mid x_1, x_2\big] &= ( \alpha_3 + \alpha_5x_1 + \alpha_6 x_2 + \alpha_7 x_1 x_2)^3 \; \mathbb{E}[(X_3 - \mu_3)^3]
\end{align}
We observe that the term in parentheses coincides exactly with the 
partial derivative, i.e.,  
\begin{align}
\label{eq:analytical_exact}
    \frac{\partial \phi}{\partial X_3} \bigg|_{x_1, x_2, \mu_3} &= \alpha_3 + \alpha_5 x_1 + \alpha_6 x_2 + \alpha_7 x_1 x_2
\end{align}
Under $\phi$ in (\ref{eq:fct}) as a polynomial, we know terms in the source environment with non-zero coefficients are $g(x_1, x_2) = [1, x_1, x_2, x_1 x_2]$. One can then fit a linear model with features in $g$ on the source environment and estimate the coefficients. The resulting predictor is $f_s(x_1, x_2) = \beta^T g(x_1, x_2)$, where $\beta_1 = \alpha_3\mu_3$, $\beta_2 = \alpha_2 + \alpha_5\mu_3$, $\beta_3 = \alpha_2 + \alpha_6 \mu_3$, $\beta_4 = \alpha_4 + \alpha_7\mu_3$. It is clear that learning $\beta$ alone cannot uniquely determine the coefficients $\alpha_i$. This intuition is supported by Theorem \ref{thm:impossibility}. To illustrate the main idea of our method, consider the error in the source environment after fitting a linear predictive model $f_s$: $Y - f_s(x_1, x_2)$. Let $W$ be some transformation of the error, where $W =(Y - f_s(x_1, x_2))^3 / k_3$ and $k_3 = \mathbb{E}[(X_3 - \mu_3)^3]$ estimated by observed $X_3$ samples in the target environment. Fit $W$ against $(\mathbf{\theta}^T g(x_1, x_2))^3$ and estimate the coefficients $\theta$, as shown in (\ref{eq:skewexample}), enables the estimation of the coefficients $\alpha_3, \alpha_5, \alpha_6, \alpha_7$. Combined with the estimated coefficients $\beta$, we can uniquely determine the coefficients of $f_t$ without the need to observe $Y$ from the target environment. 

\textbf{Discussion} The intuition behind this seemingly surprising result is rather straightforward -- $X_3$ though unobserved in the source, is a generating factor of $Y$. Its information is not only contained in the marginalized mean but also in the residual distribution of the error after fitting a discriminative model.

\subsubsection{Out-of-variable Learning}
\label{sec:proposed-method}

This phenomenon is extendable to more general settings. Theorem \ref{thm:conditionalskew} shows that the moments in the residuals still provide additional information about the partial derivative of the function $\phi$ w.r.t $X_3$ for general nonlinear smooth functions. Appendix \ref{sec:conditional-skew-proof} shows its multivariate statement and the proof. \looseness=-1

\begin{restatable}{theorem}{conditionalskew}
\label{thm:conditionalskew}
    Consider the problem setup in \cref{sec:problem-setup} and assume the function $\phi$ is everywhere twice differentiable with respect to $X_3$. Suppose from the source environment we learn a function $f_s(x_1, x_2) = \mathbb{E}[Y \mid x_1, x_2]$. Using first-order Taylor approximation on the function $\phi: x_1 \times x_2 \times \mathcal{X}_3 \rightarrow \mathbb{R}$ for fixed $x_1, x_2$, the moments of the residual distribution in the source environment take the form
    \begin{equation}
        \mathbb{E}[(Y - f_s(x_1, x_2))^n \mid x_1, x_2] = \sum_{k=0}^{n} {n\choose k} \mathbb{E}[\epsilon^k] \left(\frac{\partial \phi}{\partial X_3} \bigg|_{x_1, x_2, \mu_3}\right)^{n-k} \mathbb{E}[(X_3 - \mu_3)^{n-k}].
    \end{equation}
    For $n=3$, this reduces to
    \begin{equation}
        \mathbb{E}[(Y - f_s(x_1, x_2))^3 \mid x_1, x_2] = \left(\frac{\partial \phi}{\partial X_3} \bigg|_{x_1, x_2, \mu_3}\right)^3 \mathbb{E}[(X_3 - \mu_3)^3] + \mathbb{E}[\epsilon^3].
    \end{equation}
\end{restatable}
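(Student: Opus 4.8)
The plan is to make the first-order Taylor approximation explicit, reduce the residual to a linear-in-$X_3$ random variable plus noise, and then expand the $n$-th power by the binomial theorem. First I would compute $f_S$ in closed form. Because $X_3$ is generated independently of $X_1, X_2$ and $\epsilon$ has zero mean and is independent of $(X_1, X_2, X_3)$, conditioning on $x_1, x_2$ gives
\begin{equation}
f_S(x_1, x_2) = \mathbb{E}[Y \mid x_1, x_2] = \mathbb{E}_{X_3}[\phi(x_1, x_2, X_3)].
\end{equation}
This is the only place the SCM structure (mutual independence of $U_1, U_2, U_3$ and of $\epsilon$) enters, and it is what lets us freeze $x_1, x_2$ as constants and average out over $X_3$ alone.

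Next I would substitute the first-order Taylor approximation of $\phi$ in its third argument around $X_3 = \mu_3$. Writing $D := \frac{\partial \phi}{\partial X_3}\big|_{x_1, x_2, \mu_3}$, a constant once $x_1, x_2$ are fixed, we have $\phi(x_1, x_2, X_3) \approx \phi(x_1, x_2, \mu_3) + D\,(X_3 - \mu_3)$. Taking the expectation over $X_3$, the centred term vanishes because $\mathbb{E}[X_3 - \mu_3] = 0$ by the definition of $\mu_3$, so $f_S(x_1, x_2) \approx \phi(x_1, x_2, \mu_3)$. Subtracting, the constant $\phi(x_1, x_2, \mu_3)$ cancels and the residual collapses to
\begin{equation}
Y - f_S(x_1, x_2) \approx D\,(X_3 - \mu_3) + \epsilon.
\end{equation}

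The remaining step is purely algebraic. Raising to the $n$-th power and applying the binomial theorem gives $(D(X_3-\mu_3)+\epsilon)^n = \sum_{k=0}^{n}\binom{n}{k}\epsilon^k\,D^{n-k}(X_3-\mu_3)^{n-k}$; taking expectations and using $\epsilon \indep X_3$ to factor each term as $\mathbb{E}[\epsilon^k]\,\mathbb{E}[(X_3-\mu_3)^{n-k}]$ yields the claimed formula after reinserting $D = \frac{\partial \phi}{\partial X_3}\big|_{x_1, x_2, \mu_3}$. Specialising to $n=3$, the $k=1$ term dies because $\mathbb{E}[\epsilon]=0$ and the $k=2$ term dies because $\mathbb{E}[X_3-\mu_3]=0$, leaving exactly $D^3\,\mathbb{E}[(X_3-\mu_3)^3] + \mathbb{E}[\epsilon^3]$.

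The main obstacle is not the computation but being honest about the approximation: the displayed identities are exact only after $\phi$ is replaced by its first-order Taylor polynomial in $X_3$. Since $\phi$ is assumed twice differentiable in its third argument, the Lagrange remainder is of order $(X_3-\mu_3)^2$, so the neglected contribution to the $n$-th moment is governed by higher central moments of $X_3$; I would flag that this error is controlled when $X_3$ concentrates around $\mu_3$, which is precisely the regime in which the residual skew cleanly reads off the partial derivative. I should also verify that $\mathbb{E}[X_3-\mu_3]=0$ and the independence-based factorisation are the only probabilistic inputs, so that no regularity on the distributions of $X_3$ or $\epsilon$ is needed beyond the existence of the relevant moments.
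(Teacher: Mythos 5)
Your proposal is correct and follows essentially the same route as the paper's proof: first-order Taylor expansion of $\phi$ in $X_3$ around $\mu_3$ for fixed $x_1, x_2$, reduction of the residual to $\frac{\partial \phi}{\partial X_3}\big|_{x_1,x_2,\mu_3}(X_3-\mu_3)+\epsilon$, then binomial expansion with the independence of $\epsilon$ and $X_3$ to factor the mixed moments, and the same vanishing of the $k=1,2$ terms at $n=3$. Your version is in fact slightly more explicit than the paper's, since you spell out the step $f_S(x_1,x_2)=\mathbb{E}_{X_3}[\phi(x_1,x_2,X_3)]$ (which the paper uses implicitly) and you flag where the Taylor remainder is being dropped.
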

Theorem \ref{thm:conditionalskew} shows that the moments of the residual distribution include a contribution from both the moments of the noise variable and the propagated effects caused by variables unique to the target environment. When $n=3$, most terms that involve the undesired noise variable disappear. 

\begin{restatable}{corollary}{exactidentification}
\label{corollary:exact_identification}
For OOV scenarios described in \cref{sec:problem-setup}, learning from the moment of the error distribution allows exact identification of $\phi$ when $\phi(x_1, x_2) = \sum_{p, q} c_i h(x_1, x_2)^{p}x_3^{q}$, where $p, q \in \{0, 1\}$ and $c_i \in \mathbb{R}, \forall i$ and $h$ can be any function. 
\end{restatable}

To see Corollary \ref{corollary:exact_identification} in action, recall when $\phi$ is as in (\ref{eq:fct}), our solution is analytically exact, as shown in (\ref{eq:analytical_exact}). Theorem \ref{thm:conditionalskew} and Corollary \ref{corollary:exact_identification} demonstrate that in this challenging OOV scenario where existing transfer learning methods fail to apply (cf. Theorem \ref{thm:impossibility}), learning from the residual distribution offers exact identification for a certain class of generating functions.


Next, we build a practical predictor that utilizes the above theoretical insights and present experimental results to evaluate OOV learning performance. To start with, note the target predictive function can be Monte Carlo approximated if the true function $\phi$ is known:
\begin{align*}
    f_t(x_2, x_3) &= \int \phi(x_1, x_2, x_3) p(x_1) dx_1 \approx \frac{1}{n} \sum_{i=1}^n \phi(x_{1,i}, x_2, x_3), \quad \text{where} \ x_{1, i} \sim p(x_1)
\end{align*}
Assume $\phi$ is smooth, by first-order Taylor approximation evaluated at $(x_1, x_2, \mu_3)$, rewrite $\phi$ as:
\begin{align}
\label{eq:phi-taylor}
    \phi(x_1, x_2, x_3) &= \phi(x_1, x_2, \mu_3) + \frac{\partial \phi}{\partial X_3} \bigg|_{x_1, x_2, \mu_3} (x_3 - \mu_3) + \mathcal{O}((x_3 - \mu_3)^2)
\end{align}
Taking expectations of $X_3$ on both sides of Eq.~\ref{eq:phi-taylor}, we see that $f_s(x_1, x_2) \approx \phi(x_1, x_2, \mu_3)$. Theorem \ref{thm:conditionalskew} states that we can estimate the partial derivative term from the third moment of the error distribution. We thus propose \textit{MomentLearn}, an OOV estimate $\tilde{f}_t$ for the target predictive function:
\begin{equation}
\label{eqref:proposed-method}
    \tilde{f}_t(x_2, x_3) = \frac{1}{n} \sum_{i=1}^n f_s(x_{1, i}, x_2) + h_\theta (x_{1, i}, x_2) (x_3 - \mu_3), \quad \text{with} \ x_{1, i} \sim p(x_1),
\end{equation}
where $h_\theta$ is a MLP parameterized by $\theta$, modelling the partial derivative by regressing on the 3rd moment of the residual distribution from the source. Note the proposed predictor is strictly better than naïvely marginalizing $f_s$ on the shared variables. The proposed estimate also satisfies the marginal consistency condition as the second term in (\ref{eqref:proposed-method}) vanishes when taking expectations. 
See Algorithm \ref{alg:Zero-shot-learning} in Appendix \ref{sec:algorithm} for a detailed procedure. 
  
\section{Experiments}
\label{sec:experiments}
We perform experiments to evaluate our algorithm's OOV learning performance. \textit{OOV} in this context means that $X_3$ is not observed in the source, and we do not observe $Y$ in the target environment. We generate synthetic data according to \cref{sec:problem-setup} for a range of function classes. The inputs $\mathbf{X} \in \mathbb{R}^3$ are independently generated from a Gamma distribution. Variable $Y$ is a function of the inputs, and the observed values are generated with noise, $Y_{\text{obs}} = Y + \epsilon$, where $\epsilon \sim \mathcal{N}(0, \sigma^2), \sigma = 0.1$. 

We benchmark our method's performance against several baselines. As a measure of performance against an \textit{oracle} solution, we compare with the predictor trained from scratch if we jointly observe {\em all} variables in the target environment on large data sets. To highlight the need to predict beyond marginalizing learnt models on the common variable, we compare with the \textit{marginal} predictor. To benchmark against mean imputation method for missing data problem, we compare with the \textit{mean imputed} predictor. See Appendix \ref{sec:experimental-details} for implementation details.

\textbf{Prediction Performance} To evaluate our method's performance, we compare contour plots of prediction on the target variable $Y$ given covariates $X_2, X_3$ in the target environment. Its functional relationship is as described in Eq. \ref{eq:fct}. Fig.~\ref{fig:overview} (b)-(e) shows that our method's solution is almost identical to the oracle solution, whereas the marginal and mean imputed predictor deviate far from the oracle.

\begin{table} \caption{Our method's (``MomentLearn'') OOV prediction performance in the target environment, compared to the ``Marginal'' baseline, the predictor that imputes missing variable with its mean (``Mean Imputed'') and the solution that has access to the full joint observations on the target domain (``Oracle''). Shown are mean and standard deviations of the MSE loss between the predicted and observed
target values. $\mathcal{GP}_i(\cdot)$ denotes a function sampled from a Gaussian Process with zero mean and Gaussian kernel. 
MomentLearn performs as expected by our theoretical results and even exhibits a degree of robustness to function classes that are not covered by Theorem \ref{thm:conditionalskew}.} 
\begin{adjustbox}{width=0.8\columnwidth}
\begin{tabular}{cccc}
\toprule
    {} & {$\sum_i \alpha_i X_i$} & {$+\sum_{i<j} \beta_{ij} X_i X_j$} & {$+\sum_i \gamma_i X_i^2 $} \\ [0.5ex] \midrule 
    Oracle  & 0.31 $\pm$ 0.15 & 0.26 $\pm$ 0.44 & 0.57 $\pm$ 0.32  \\ [1ex] 
    MomentLearn  & 0.32 $\pm$ 0.15 & 0.31 $\pm$ 0.48 & 0.71 $\pm$ 0.37 \\[1ex]  
    MeanImputed  & 0.45 $\pm$ 0.21 & 0.48 $\pm$ 0.45 & 1.50 $\pm$ 1.03 \\[1ex] 
    Marginal  & 0.52 $\pm$ 0.45  & 0.80 $\pm$ 0.75 & 1.75 $\pm$ 1.27    \\ [1ex]  
        \midrule
     {} & {$\mathcal{GP}_1(X_1, X_2) + \alpha_3 X_3$} & {$+ \mathcal{GP}_2(X_1, X_2) \cdot X_3$} & {$+ \mathcal{GP}_3(X_1, X_2) \cdot X_3^2 $} \\ [0.5ex] \midrule 
    Oracle  & 0.06 $\pm$ 0.04  & 0.08 $\pm$ 0.06 & 0.18 $\pm$ 0.14 \\ [1ex]
    MomentLearn  & 0.06 $\pm$ 0.04 & 0.10 $\pm$ 0.06 & 0.67 $\pm$ 0.59 \\[1ex]  
    MeanImputed  & 0.37$\pm$ 0.26 & 0.32 $\pm$ 0.25 & 1.31 $\pm$ 0.95 \\[1ex] 
    Marginal  & 0.33 $\pm$ 0.14 & 0.41 $\pm$ 0.37 & 1.46 $\pm$ 1.21   \\ [1ex]  
    \bottomrule
\end{tabular}
\end{adjustbox}
\label{table:systematic}
\end{table}

\textbf{Systematic Analysis} To systematically analyse the robustness of our method w.r.t different function classes, we compare our method's prediction against the marginal, mean imputed and oracle predictor on increasingly more complex functions. We start with base functions with linear additive terms, i.e., $f_1(X_1, X_2, X_3) = \sum_i \alpha_i X_i$. In addition to base functions, we consider functions that additionally incorporate linear interaction terms, i.e., $f_2(X_1, X_2, X_3) = f_1(X_1, X_2, X_3) + \sum_{i<j} \beta_{ij} X_i X_j$. For the final function class, we incorporate additional square terms $f_3(X_1, X_2, X_3) = f_2(X_1, X_2, X_3) + \sum_i \gamma_i X_i^2$. We use $10k$ data in the source environment and randomly sample $5$ functions in each function class and average the results after a hyperparameter sweep. Table \ref{table:systematic} records the mean and standarad deviation of the MSE loss between the predicted and observed target values for different methods. We observe the proposed MomentLearn performs comparatively with the oracle predictor and consistently outperforms both the marginal and mean imputed predictors in its accuracy and reliability even for the function class $f_3$, which the method is not guaranteed to identify. We further evaluate more general functions, where we sample $5$ functions randomly generated by Gaussian processes with Gaussian kernel. Just as above, we observe a similar result which supports our theoretical results. \looseness=-1

\begin{figure}
    \centering
    \begin{subfigure}[t]{0.45\linewidth}
    \includegraphics[scale = 0.4]{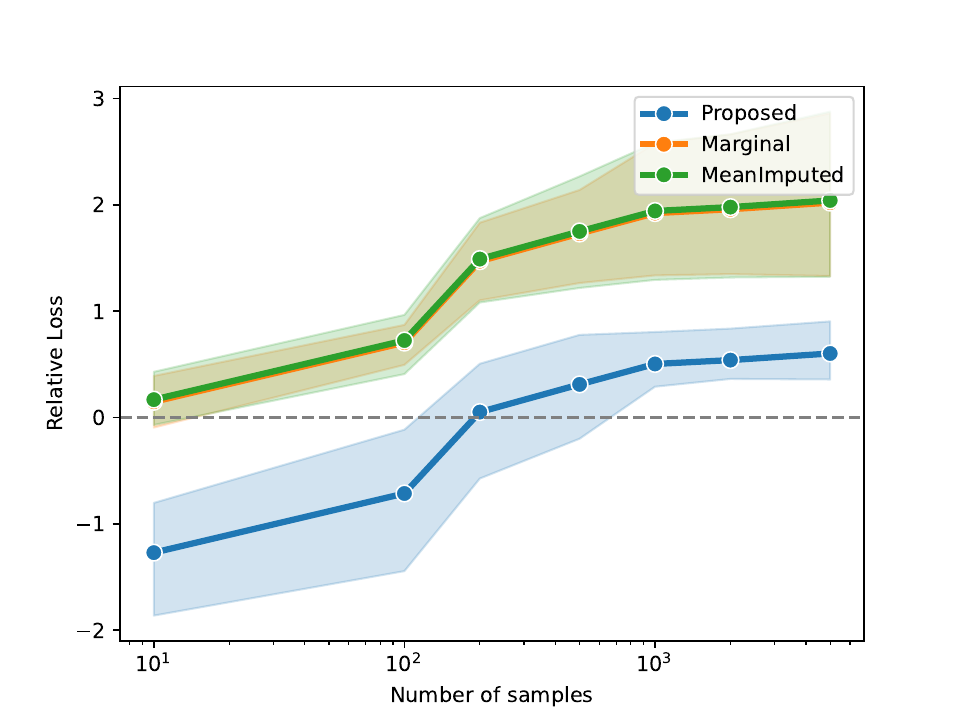}
    \caption{Polynomial}
    \end{subfigure}
    \hfill
    \begin{subfigure}[t]{0.45\linewidth}
    \includegraphics[scale=0.4]{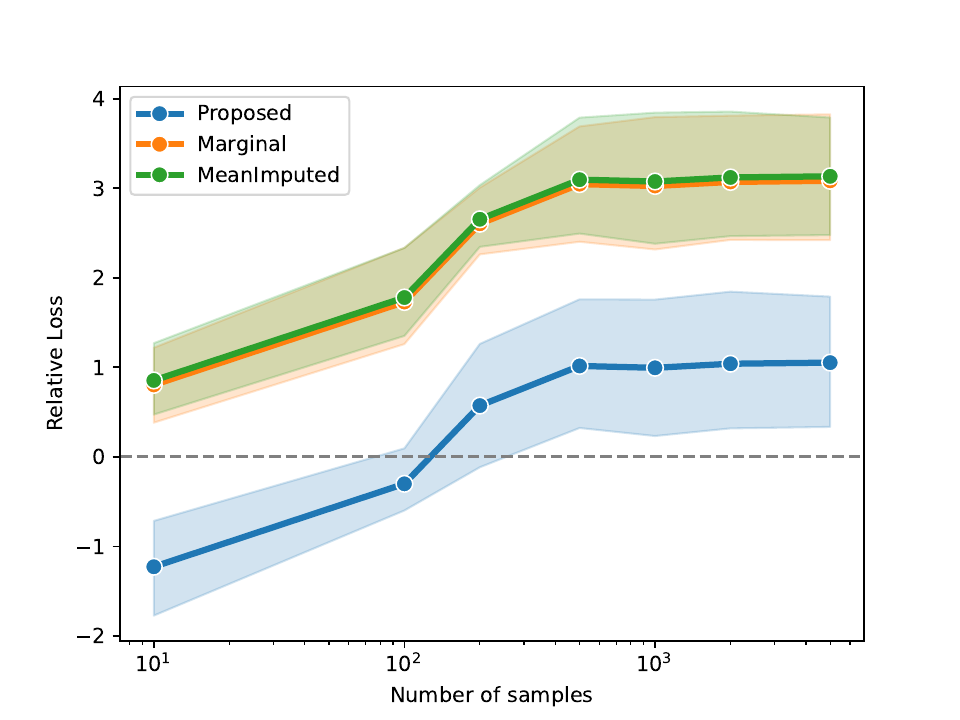}
    \caption{Nonlinear}
    \end{subfigure}
\caption{Shown are mean of the relative loss (and its $95\%$ confidence interval) for varying numbr of joint samples observed in the target domain. MomentLearn outperforms the joint predictor in the few-sample region and is always preferred over the marginal predictor.}
\label{fig:quant_comparisons}
\end{figure}

\textbf{Sample Efficiency} To evaluate our method's comparative advantage in terms of sample efficiency had we observed $Y$ in the target domain, we consider a few-shot learning setting. Suppose we observe a few samples with joint variables $(X_2, X_3, Y)$ in the target environment.
We generate the target variable $Y$ as a function of the inputs, where the function takes either polynomial or nonlinear form: $
    Y_{poly} = \boldsymbol{\alpha}^T \text{Poly}(\mathbf{\tilde{X}}), Y_{nonlinear} = \sqrt{(\boldsymbol{\alpha}^T \mathbf{\tilde{X}})^{\circ 2}},
$
where $\tilde{\mathbf{X}}$ are standardized covariates, $\text{Poly}(\mathbf{\tilde{X}})$ are polynomial features as in Eq.~\ref{eq:fct}, the coefficients $\boldsymbol{\alpha} \sim \mathcal{N}(\mathbf{0}, \textbf{I})$ and $\circ 2$ denotes elementwise square operation. Fig.\ref{fig:quant_comparisons} shows the mean of the relative loss (and its $95\%$ confidence interval) over $5$ runs, $\log(\text{loss}/\text{loss}_o)$, where $\text{loss}$ is the OOV loss of the respective method trained on $100k$ points from the source environment, and $\text{loss}_o$ is the loss of the joint predictor trained with varying numbers of joint observations in the target environment. When the relative loss is zero (dashed black line), the method is on par with the oracle predictor. When the relative loss is below zero, the respective method achieves a lower loss than the joint predictor trained on actual joint samples. We see that the proposed method always outperforms the marginal and mean imputed predictors. Both predictors never outperform training from scratch, irrespective of the number of joint samples observed. The proposed method beats the joint predictor until about $100$ joint points are used. 

\section{Conclusion}
We used Fig.~\ref{fig:overview} to suggest that it would seem hard to enable transfer from the source environment (blue box) to the target environment (orange box). We supported this intuition by Theorem~\ref{thm:impossibility}. However, we also showed that under certain assumptions (e.g., the variables follow a causal graph with additive noise, and the functions in the data generation process are smooth), there is a valid source of information in this OOV scenario, enabling exact identification in certain function classes. We also proposed an algorithm to utilize this information and showed experiments in which the algorithm exhibited a degree of robustness with respect to a violation of the theoretical conditions. \looseness=-1

We only considered the error of a target predictive function as a performance condition in Def.~\ref{def:oov-generalization}. We note that in the field of OOD generalization, a larger variety of settings has been considered \citep{wildberger2023interventional}. We briefly discuss the extension of OOV in multi-environments (\ref{sec:more_environments}), its robustness (\ref{sec:robustness}) and its potential applications (\ref{sec:ood_oov_applications}). Clearly, real-world problems require systems to achieve both OOD and OOV generalization. AI in medicine, for example, requires us to tackle the OOV problem for rare disease prediction and sample-efficient generalization. \looseness=-1

We are far from being able to claim robust methods for real-world practical problems --- the present contribution lies mainly in opening up avenues for future research. Some of them are of a conceptual nature, and some connected to the limitations of the present approach. 
We consider this work conceptually novel, exploring how generalization is intricately related to observability of variables and their (causal) relationships. 
We hope our work inspires further studies to explore and develop methodologies that apply to different OOV problems.



\section*{Acknowledgment} B.S.\ would like to acknowledge a number of discussions with Dominik Janzing during the last decades that helped him understand the role of additional variables in causal modeling.

\bibliographystyle{plainnat}
\bibliography{main}

\begin{thebibliography}{55}
\providecommand{\natexlab}[1]{#1}
\providecommand{\url}[1]{\texttt{#1}}
\expandafter\ifx\csname urlstyle\endcsname\relax
  \providecommand{\doi}[1]{doi: #1}\else
  \providecommand{\doi}{doi: \begingroup \urlstyle{rm}\Url}\fi

\bibitem[Ahuja et~al.(2021)Ahuja, Caballero, Zhang, Gagnon-Audet, Bengio,
  Mitliagkas, and Rish]{ahuja2021invariance}
Kartik Ahuja, Ethan Caballero, Dinghuai Zhang, Jean-Christophe Gagnon-Audet,
  Yoshua Bengio, Ioannis Mitliagkas, and Irina Rish.
\newblock Invariance principle meets information bottleneck for
  out-of-distribution generalization.
\newblock \emph{Advances in Neural Information Processing Systems},
  34:\penalty0 3438--3450, 2021.

\bibitem[Arjovsky et~al.(2019)Arjovsky, Bottou, Gulrajani, and
  Lopez-Paz]{arjovsky2019invariant}
Martin Arjovsky, L{\'e}on Bottou, Ishaan Gulrajani, and David Lopez-Paz.
\newblock Invariant risk minimization.
\newblock \emph{arXiv preprint arXiv:1907.02893}, 2019.

\bibitem[Bareinboim and Pearl(2013)]{bareinboim2013general}
Elias Bareinboim and Judea Pearl.
\newblock A general algorithm for deciding transportability of experimental
  results.
\newblock \emph{Journal of causal Inference}, 1\penalty0 (1):\penalty0
  107--134, 2013.

\bibitem[Bareinboim and Pearl(2016)]{bareinboim2016causal}
Elias Bareinboim and Judea Pearl.
\newblock Causal inference and the data-fusion problem.
\newblock \emph{Proceedings of the National Academy of Sciences}, 113\penalty0
  (27):\penalty0 7345--7352, 2016.

\bibitem[Bengio et~al.(2019)Bengio, Deleu, Rahaman, Ke, Lachapelle, Bilaniuk,
  Goyal, and Pal]{bengio2019meta}
Yoshua Bengio, Tristan Deleu, Nasim Rahaman, Rosemary Ke, S{\'e}bastien
  Lachapelle, Olexa Bilaniuk, Anirudh Goyal, and Christopher Pal.
\newblock A meta-transfer objective for learning to disentangle causal
  mechanisms.
\newblock \emph{arXiv preprint arXiv:1901.10912}, 2019.

\bibitem[Cortes et~al.(2019)Cortes, Mohri, and Medina]{cortes2019adaptation}
Corinna Cortes, Mehryar Mohri, and Andr{{\'e}}s~Mu{{\~n}}oz Medina.
\newblock Adaptation based on generalized discrepancy.
\newblock \emph{Journal of Machine Learning Research}, 20\penalty0
  (1):\penalty0 1--30, 2019.
\newblock URL \url{http://jmlr.org/papers/v20/15-192.html}.

\bibitem[Degtiar and Rose(2023)]{degtiar2023review}
Irina Degtiar and Sherri Rose.
\newblock A review of generalizability and transportability.
\newblock \emph{Annual Review of Statistics and its Application}, 10:\penalty0
  501--524, 2023.

\bibitem[Dodge et~al.(2020)Dodge, Ilharco, Schwartz, Farhadi, Hajishirzi, and
  Smith]{dodge2020fine}
Jesse Dodge, Gabriel Ilharco, Roy Schwartz, Ali Farhadi, Hannaneh Hajishirzi,
  and Noah Smith.
\newblock Fine-tuning pretrained language models: Weight initializations, data
  orders, and early stopping.
\newblock \emph{arXiv preprint arXiv:2002.06305}, 2020.

\bibitem[Donner(1982)]{donner1982relative}
Allan Donner.
\newblock The relative effectiveness of procedures commonly used in multiple
  regression analysis for dealing with missing values.
\newblock \emph{The American Statistician}, 36\penalty0 (4):\penalty0 378--381,
  1982.

\bibitem[Evans and Didelez(2021)]{evans2021parameterizing}
Robin~J Evans and Vanessa Didelez.
\newblock Parameterizing and simulating from causal models.
\newblock \emph{arXiv preprint arXiv:2109.03694}, 2021.

\bibitem[Goyal et~al.(2020)Goyal, Lamb, Hoffmann, Sodhani, Levine, Bengio, and
  Sch{\"o}lkopf]{RIMs}
Anirudh Goyal, Alex Lamb, Jordan Hoffmann, Shagun Sodhani, Sergey Levine,
  Yoshua Bengio, and Bernhard Sch{\"o}lkopf.
\newblock Recurrent independent mechanisms.
\newblock In \emph{International Conference on Learning Representations}, 2020.

\bibitem[Gresele et~al.(2022)Gresele, von Kügelgen, Kübler, Kirschbaum,
  Schölkopf, and Janzing]{Gresele2022}
Luigi Gresele, Julius von Kügelgen, Jonas~M. Kübler, Elke Kirschbaum,
  Bernhard Schölkopf, and Dominik Janzing.
\newblock Causal inference through the structural causal marginal problem.
\newblock 2022.
\newblock URL \url{http://arxiv.org/abs/2202.01300}.

\bibitem[Guo et~al.(2022)Guo, T{\'o}th, Sch{\"o}lkopf, and
  Husz{\'a}r]{guo2022causal}
Siyuan Guo, Viktor T{\'o}th, Bernhard Sch{\"o}lkopf, and Ferenc Husz{\'a}r.
\newblock Causal de finetti: On the identification of invariant causal
  structure in exchangeable data.
\newblock \emph{arXiv preprint arXiv:2203.15756}, 2022.

\bibitem[Heinze-Deml et~al.(2018)Heinze-Deml, Peters, and
  Meinshausen]{HeinzeDeml2018}
Christina Heinze-Deml, Jonas Peters, and Nicolai Meinshausen.
\newblock Invariant causal prediction for nonlinear models.
\newblock \emph{Journal of Causal Inference}, 2018.

\bibitem[Hey et~al.(2009)Hey, Tansley, and Tolle]{heyfourth}
Tony Hey, Stewart Tansley, and Kristin Tolle, editors.
\newblock \emph{The Fourth Paradigm: Data-Intensive Scientific Discovery}.
\newblock Microsoft Research, Redmond, Washington, 2009.
\newblock ISBN 978-0-9825442-0-4.
\newblock URL
  \url{https://www.microsoft.com/en-us/research/publication/fourth-paradigm-data-intensive-scientific-discovery/}.

\bibitem[Hoffman et~al.(2014)Hoffman, Guadarrama, Tzeng, Hu, Donahue, Girshick,
  Darrell, and Saenko]{hoffman2014lsda}
Judy Hoffman, Sergio Guadarrama, Eric~S Tzeng, Ronghang Hu, Jeff Donahue, Ross
  Girshick, Trevor Darrell, and Kate Saenko.
\newblock Lsda: Large scale detection through adaptation.
\newblock \emph{Advances in neural information processing systems}, 27, 2014.

\bibitem[Hoyer et~al.(2009)Hoyer, Janzing, Mooij, Peters, and
  Sch\"olkopf]{Hoyer2008}
P.~O. Hoyer, D.~Janzing, J.~M. Mooij, J.~Peters, and B.~Sch\"olkopf.
\newblock Nonlinear causal discovery with additive noise models.
\newblock In \emph{{A}dvances in {N}eural {I}nformation {P}rocessing {S}ystems
  21 ({NIPS})}, pages 689--696, 2009.

\bibitem[Janzing(2018)]{Janzing2018}
Dominik Janzing.
\newblock Merging joint distributions via causal model classes with low vc
  dimension, 2018.
\newblock URL \url{https://arxiv.org/abs/1804.03206}.

\bibitem[Janzing and Sch{\"o}lkopf(2010)]{janzing2010causal}
Dominik Janzing and Bernhard Sch{\"o}lkopf.
\newblock Causal inference using the algorithmic {M}arkov condition.
\newblock \emph{IEEE Transactions on Information Theory}, 56\penalty0
  (10):\penalty0 5168--5194, 2010.

\bibitem[Jiang and Veitch(2022)]{jiang2022invariant}
Yibo Jiang and Victor Veitch.
\newblock Invariant and transportable representations for anti-causal domain
  shifts.
\newblock \emph{arXiv preprint arXiv:2207.01603}, 2022.

\bibitem[Kim and Curry(1977)]{kim1977treatment}
Jae-On Kim and James Curry.
\newblock The treatment of missing data in multivariate analysis.
\newblock \emph{Sociological Methods \& Research}, 6\penalty0 (2):\penalty0
  215--240, 1977.

\bibitem[Koller and Friedman(2010)]{Koller2010}
Daphne Koller and Nir Friedman.
\newblock \emph{Probabilistic Graphical Models: Principles and Techniques}.
\newblock 2010.

\bibitem[Krueger et~al.(2021)Krueger, Caballero, Jacobsen, Zhang, Binas, Zhang,
  Le~Priol, and Courville]{krueger2021out}
David Krueger, Ethan Caballero, Joern-Henrik Jacobsen, Amy Zhang, Jonathan
  Binas, Dinghuai Zhang, Remi Le~Priol, and Aaron Courville.
\newblock Out-of-distribution generalization via risk extrapolation (rex).
\newblock In \emph{International Conference on Machine Learning}, pages
  5815--5826. PMLR, 2021.

\bibitem[Long et~al.(2015)Long, Cao, Wang, and Jordan]{long2015learning}
Mingsheng Long, Yue Cao, Jianmin Wang, and Michael Jordan.
\newblock Learning transferable features with deep adaptation networks.
\newblock In \emph{International conference on machine learning}, pages
  97--105. PMLR, 2015.

\bibitem[Lu et~al.(2021)Lu, Wu, Hern{\'a}ndez-Lobato, and
  Sch{\"o}lkopf]{lu2021invariant}
Chaochao Lu, Yuhuai Wu, Jos{\'e}~Miguel Hern{\'a}ndez-Lobato, and Bernhard
  Sch{\"o}lkopf.
\newblock Invariant causal representation learning for out-of-distribution
  generalization.
\newblock In \emph{International Conference on Learning Representations}, 2021.

\bibitem[Mejia et~al.(2021)Mejia, Kirschbaum, and Janzing]{Mejia2021}
Sergio Hernan~Garrido Mejia, Elke Kirschbaum, and Dominik Janzing.
\newblock Obtaining causal information by merging datasets with maxent.
\newblock 7 2021.
\newblock URL \url{http://arxiv.org/abs/2107.07640}.

\bibitem[Meyerson and Miikkulainen(2020)]{Meyerson2020TheTO}
Elliot Meyerson and Risto Miikkulainen.
\newblock The traveling observer model: Multi-task learning through spatial
  variable embeddings.
\newblock \emph{ArXiv}, abs/2010.02354, 2020.
\newblock URL \url{https://api.semanticscholar.org/CorpusID:222140630}.

\bibitem[Oquab et~al.(2014)Oquab, Bottou, Laptev, and Sivic]{oquab2014learning}
Maxime Oquab, Leon Bottou, Ivan Laptev, and Josef Sivic.
\newblock Learning and transferring mid-level image representations using
  convolutional neural networks.
\newblock In \emph{Proceedings of the IEEE conference on computer vision and
  pattern recognition}, pages 1717--1724, 2014.

\bibitem[Parascandolo et~al.(2018)Parascandolo, Kilbertus, Rojas-Carulla, and
  Sch{\"o}lkopf]{ParKilRojSch18}
G.~Parascandolo, N.~Kilbertus, M.~Rojas-Carulla, and B.~Sch{\"o}lkopf.
\newblock Learning independent causal mechanisms.
\newblock In \emph{Proceedings of the 35th International Conference on Machine
  Learning, PMLR 80:4036-4044}, 2018.

\bibitem[Parascandolo et~al.(2021)Parascandolo, Neitz, Orvieto, Gresele, and
  Sch{\"o}lkopf]{parascandolo2021learning}
Giambattista Parascandolo, Alexander Neitz, Antonio Orvieto, Luigi Gresele, and
  Bernhard Sch{\"o}lkopf.
\newblock Learning explanations that are hard to vary.
\newblock In \emph{International Conference on Learning Representations}, 2021.
\newblock URL \url{https://openreview.net/forum?id=hb1sDDSLbV}.

\bibitem[Pearl(2009)]{pearl2009causality}
Judea Pearl.
\newblock \emph{Causality}.
\newblock Cambridge University Press, 2009.

\bibitem[Pearl and Bareinboim(2022)]{pearl2022external}
Judea Pearl and Elias Bareinboim.
\newblock External validity: From do-calculus to transportability across
  populations.
\newblock In \emph{Probabilistic and causal inference: The works of Judea
  Pearl}, pages 451--482. 2022.

\bibitem[Perry et~al.(2022)Perry, von K\"{u}gelgen, and
  Sch\"{o}lkopf]{perry2022causal}
Ronan Perry, Julius von K\"{u}gelgen, and Bernhard Sch\"{o}lkopf.
\newblock Causal discovery in heterogeneous environments under the sparse
  mechanism shift hypothesis.
\newblock In S.~Koyejo, S.~Mohamed, A.~Agarwal, D.~Belgrave, K.~Cho, and A.~Oh,
  editors, \emph{Advances in Neural Information Processing Systems}, volume~35,
  pages 10904--10917. Curran Associates, Inc., 2022.
\newblock URL
  \url{https://proceedings.neurips.cc/paper_files/paper/2022/file/46a126492ea6fb87410e55a58df2e189-Paper-Conference.pdf}.

\bibitem[Peters et~al.(2017)Peters, Janzing, and Sch{\"o}lkopf]{PetJanSch17}
J.~Peters, D.~Janzing, and B.~Sch{\"o}lkopf.
\newblock \emph{Elements of Causal Inference - Foundations and Learning
  Algorithms}.
\newblock MIT Press, Cambridge, MA, USA, 2017.

\bibitem[Peters et~al.(2016)Peters, B{\"u}hlmann, and Meinshausen]{Peters2016}
Jonas Peters, Peter B{\"u}hlmann, and Nicolai Meinshausen.
\newblock Causal inference using invariant prediction: identification and
  confidence intervals.
\newblock \emph{Journal of the Royal Statistical Society}, Series B:\penalty0
  947--1012, 2016.

\bibitem[Pfister et~al.(2019)Pfister, B{\"u}hlmann, and
  Peters]{pfister2019invariant}
Niklas Pfister, Peter B{\"u}hlmann, and Jonas Peters.
\newblock Invariant causal prediction for sequential data.
\newblock \emph{Journal of the American Statistical Association}, 114\penalty0
  (527):\penalty0 1264--1276, 2019.

\bibitem[Reed et~al.(2022)Reed, Zolna, Parisotto, Colmenarejo, Novikov,
  Barth-maron, Gim{\'e}nez, Sulsky, Kay, Springenberg, Eccles, Bruce, Razavi,
  Edwards, Heess, Chen, Hadsell, Vinyals, Bordbar, and de~Freitas]{reed2022a}
Scott Reed, Konrad Zolna, Emilio Parisotto, Sergio~G{\'o}mez Colmenarejo,
  Alexander Novikov, Gabriel Barth-maron, Mai Gim{\'e}nez, Yury Sulsky, Jackie
  Kay, Jost~Tobias Springenberg, Tom Eccles, Jake Bruce, Ali Razavi, Ashley
  Edwards, Nicolas Heess, Yutian Chen, Raia Hadsell, Oriol Vinyals, Mahyar
  Bordbar, and Nando de~Freitas.
\newblock A generalist agent.
\newblock \emph{Transactions on Machine Learning Research}, 2022.
\newblock ISSN 2835-8856.
\newblock URL \url{https://openreview.net/forum?id=1ikK0kHjvj}.
\newblock Featured Certification, Outstanding Certification.

\bibitem[Robins(1999)]{robins1999association}
James~M Robins.
\newblock Association, causation, and marginal structural models.
\newblock \emph{Synthese}, 121\penalty0 (1/2):\penalty0 151--179, 1999.

\bibitem[Rojas-Carulla et~al.(2018)Rojas-Carulla, Sch{\"o}lkopf, Turner, and
  Peters]{rojas2018invariant}
Mateo Rojas-Carulla, Bernhard Sch{\"o}lkopf, Richard Turner, and Jonas Peters.
\newblock Invariant models for causal transfer learning.
\newblock \emph{The Journal of Machine Learning Research}, 19\penalty0
  (1):\penalty0 1309--1342, 2018.

\bibitem[Rubin(1976)]{rubin1976inference}
Donald~B Rubin.
\newblock Inference and missing data.
\newblock \emph{Biometrika}, 63\penalty0 (3):\penalty0 581--592, 1976.

\bibitem[Sch{\"o}lkopf et~al.(2011)Sch{\"o}lkopf, Janzing, Peters, and
  Zhang]{SchJanPetZha2011}
B.~Sch{\"o}lkopf, D.~Janzing, J.~Peters, and K.~Zhang.
\newblock Robust learning via cause-effect models.
\newblock 2011.
\newblock URL \url{https://arxiv.org/abs/1112.2738}.

\bibitem[Sch{\"o}lkopf(2022)]{scholkopf2022causality}
Bernhard Sch{\"o}lkopf.
\newblock Causality for machine learning.
\newblock In \emph{Probabilistic and Causal Inference: The Works of Judea
  Pearl}, pages 765--804. 2022.

\bibitem[Seneviratne et~al.(2018)Seneviratne, Kahn, and
  Hernandez-Boussard]{Seneviratne2018MergingHC}
Martin~G. Seneviratne, Michael~G. Kahn, and Tina Hernandez-Boussard.
\newblock Merging heterogeneous clinical data to enable knowledge discovery.
\newblock \emph{Pacific Symposium on Biocomputing. Pacific Symposium on
  Biocomputing}, 24:\penalty0 439 -- 443, 2018.

\bibitem[Sermanet et~al.(2013)Sermanet, Eigen, Zhang, Mathieu, Fergus, and
  LeCun]{sermanet2013overfeat}
Pierre Sermanet, David Eigen, Xiang Zhang, Micha{\"e}l Mathieu, Rob Fergus, and
  Yann LeCun.
\newblock Overfeat: Integrated recognition, localization and detection using
  convolutional networks.
\newblock \emph{arXiv preprint arXiv:1312.6229}, 2013.

\bibitem[Shen et~al.(2021)Shen, Liu, He, Zhang, Xu, Yu, and
  Cui]{shen2021towards}
Zheyan Shen, Jiashuo Liu, Yue He, Xingxuan Zhang, Renzhe Xu, Han Yu, and Peng
  Cui.
\newblock Towards out-of-distribution generalization: A survey.
\newblock \emph{arXiv preprint arXiv:2108.13624}, 2021.

\bibitem[Sklar(1996)]{sklar1996random}
Abe Sklar.
\newblock Random variables, distribution functions, and copulas: a personal
  look backward and forward.
\newblock \emph{Lecture notes-monograph series}, pages 1--14, 1996.

\bibitem[Spivak(2008)]{spivak08}
Michael Spivak.
\newblock \emph{Calculus}.
\newblock Publish or Perish, fourth edition, 2008.

\bibitem[Tzeng et~al.(2015)Tzeng, Hoffman, Darrell, and
  Saenko]{tzeng2015simultaneous}
Eric Tzeng, Judy Hoffman, Trevor Darrell, and Kate Saenko.
\newblock Simultaneous deep transfer across domains and tasks.
\newblock In \emph{Proceedings of the IEEE international conference on computer
  vision}, pages 4068--4076, 2015.

\bibitem[Vapnik(1999)]{vapnik1999nature}
Vladimir Vapnik.
\newblock \emph{The nature of statistical learning theory}.
\newblock Springer, 1999.

\bibitem[Vorob’ev(1962)]{Vorob1962}
N.~N. Vorob’ev.
\newblock Consistent families of measures and their extensions.
\newblock \emph{Theory of Probability \& its Applications}, 7\penalty0
  (2):\penalty0 147--163, 1962.
\newblock \doi{10.1137/1107014}.
\newblock URL \url{https://doi.org/10.1137/1107014}.

\bibitem[Wenzel et~al.(2022)Wenzel, Dittadi, Gehler, Simon-Gabriel, Horn,
  Zietlow, Kernert, Russell, Brox, Schiele, Sch\"{o}lkopf, and
  Locatello]{NEURIPS2022_2f5acc92}
Florian Wenzel, Andrea Dittadi, Peter Gehler, Carl-Johann Simon-Gabriel, Max
  Horn, Dominik Zietlow, David Kernert, Chris Russell, Thomas Brox, Bernt
  Schiele, Bernhard Sch\"{o}lkopf, and Francesco Locatello.
\newblock Assaying out-of-distribution generalization in transfer learning.
\newblock In S.~Koyejo, S.~Mohamed, A.~Agarwal, D.~Belgrave, K.~Cho, and A.~Oh,
  editors, \emph{Advances in Neural Information Processing Systems}, volume~35,
  pages 7181--7198. Curran Associates, Inc., 2022.
\newblock URL
  \url{https://proceedings.neurips.cc/paper_files/paper/2022/file/2f5acc925919209370a3af4eac5cad4a-Paper-Conference.pdf}.

\bibitem[Wildberger et~al.(2023)Wildberger, Guo, Bhattacharyya, and
  Sch{\"o}lkopf]{wildberger2023interventional}
Jonas Wildberger, Siyuan Guo, Arnab Bhattacharyya, and Bernhard Sch{\"o}lkopf.
\newblock On the interventional {Kullback-Leibler} divergence.
\newblock \emph{Causal Learning and Reasoning (CLeaR). arXiv:2302.05380}, 2023.

\bibitem[Zhang et~al.(2021{\natexlab{a}})Zhang, Ahuja, Xu, Wang, and
  Courville]{zhang2021can}
Dinghuai Zhang, Kartik Ahuja, Yilun Xu, Yisen Wang, and Aaron Courville.
\newblock Can subnetwork structure be the key to out-of-distribution
  generalization?
\newblock In \emph{International Conference on Machine Learning}, pages
  12356--12367. PMLR, 2021{\natexlab{a}}.

\bibitem[Zhang et~al.(2015)Zhang, Gong, and Sch{\"o}lkopf]{ZhaGonSch15}
K.~Zhang, M.~Gong, and B.~Sch{\"o}lkopf.
\newblock Multi-source domain adaptation: A causal view.
\newblock In \emph{Proceedings of the Twenty-Ninth {AAAI} Conference on
  Artificial Intelligence}, pages 3150--3157. {AAAI} Press, 2015.
\newblock URL
  \url{http://www.aaai.org/ocs/index.php/AAAI/AAAI15/paper/view/10052/9994}.

\bibitem[Zhang et~al.(2021{\natexlab{b}})Zhang, Cui, Xu, Zhou, He, and
  Shen]{zhang2021deep}
Xingxuan Zhang, Peng Cui, Renzhe Xu, Linjun Zhou, Yue He, and Zheyan Shen.
\newblock Deep stable learning for out-of-distribution generalization.
\newblock In \emph{Proceedings of the IEEE/CVF Conference on Computer Vision
  and Pattern Recognition}, pages 5372--5382, 2021{\natexlab{b}}.

\end{thebibliography}

\newpage
\appendix

\section{Marginal Consistency}
\label{sec:marginal-consistency}
Let $h: \mathcal{X} \to \mathcal{Y}$ be a function defined in the joint environment $J$, where $\mathcal{X}:=(X_1, X_2, \ldots)$ is a variable space. Let $S$ be a set and $S^c$ its complement. Denote $\mathcal{X}_S := \{X_i: i \in S \}$ to be the set of variables contained in the set $S$. $h_S$ is the function $h$ restricted to the set of variables in $\mathcal{X}_S$, i.e. $h_S: \mathcal{X}_S \to \mathcal{Y}$, where $h_S(x_S) = \mathbb{E}_{X_{S^c}}[h(x_S, X_{S^c})]$. Similarly, for $h_T: \mathcal{X}_T \to \mathcal{Y}$, where $h_T(x_T) = \mathbb{E}_{X_{T^c}}[h(x_T, X_{T^c})]$. 

When $S \cap T \neq \emptyset$, let $I:= S \cap T$, then $h_I: \mathcal{X}_I \to \mathcal{Y}$, where $h_I(x_I) = \mathbb{E}_{X_{I^c}}[h(x_I, X_{I^c})]$. Since $I \subseteq S$, $S^c \subseteq I^c$, define $A_S = I^c \setminus S^c$ and similarly $B_T = I^c \setminus T^c$. Then, 
\begin{align}
    h_I(x_I) &= \mathbb{E}_{X_{I^c}}[h(x_I, X_{I^c})] \\
        &= \mathbb{E}_{X_{A_S}}[\mathbb{E}_{X_{S^c}}[h(x_S, X_{S^c})]] \\
        &= \mathbb{E}_{X_{B_T}}[\mathbb{E}_{X_{T^c}}[h(x_T, X_{T^c})]] \\
        &= \mathbb{E}_{X_{A_S}}[h_S(x_I, X_{A_S})] = \mathbb{E}_{X_{B_T}}[h_T(x_I, X_{B_T})]
\end{align}
\section{Naïve Model}
\label{sec:naive-no-interaction}
For simplicity, we consider the structural causal model described in \cref{sec:problem-setup}. The method described below also applies if we replace variable $X_i$ by any subsets of variables. Assume that the true generating function $\phi$ is additively composed of univariate functions of its covariates, i.e.\ there is no mixing term between different covariates. Then the additive model is given as: 
\begin{equation}
    Y := f_1(X_1) + f_2(X_2) + f_3(X_3) + \epsilon
\end{equation}
where $X_1, X_2, X_3$ are jointly independent of each other and $\epsilon \sim \mathcal{N}(0, \sigma^2)$. Further, $f_1, f_2, f_3$ are some unknown functions.  

We consider the following two environments: 
\begin{itemize}
    \item The source environment $\mathcal{E}_S$ contains variables $(X_1, X_2, Y)$
    \item The target environment $\mathcal{E}_t$ contains variables $(X_2, X_3, Y)$
\end{itemize}
The goal is to transfer knowledge from environment $\mathcal{E}_S$ to environment $\mathcal{E}_t$ in order to learn the predictor $\mathbb{E}[Y \mid X_2, X_3]$. 

One approach is to build separate neural networks for datasets $(X_1, Y)$ and $(X_2, Y)$. With sufficient data, the learnt functions will equals$f_1, f_2$ respectively due to independence between $X_i$. Knowledge transfer can the be achieved by migrating the learnt function $f_2$ to the target environment and then using data from the target environment to learn the function $f_3$ over the previously unobserved variable. This solution does not work if there is an interaction between covariates, since the learnt function does not solely depend on the unobserved variable. 

\section{Proofs}
We next show proofs for theorems in the main text. We will first state theorem in its single-variate form and state its multivariate extension. As single-variate proof is easily deducible from its multivariate extension, we will only show its multivariate proof. To start, we will formulate an equivalent problem setup for multivariate cases. 

\subsection{Problem fomulation in multivariate version}
\label{sec:problem_formulation_multivariate}
Consider a simple structural causal model with additive noise \citep{Hoyer2008}:
\begin{align}
\label{eqref:SCM-partial}
\begin{split}
    &Y := \phi(\textbf{PA}_Y) + \epsilon \\
\end{split}
\end{align}
where $\phi$ is some function, $Y \in \mathbb{R}$ and $\epsilon \sim \mathcal{N}(0, \sigma^2)$ and $X_i \in \textbf{PA}_Y$ are jointly independent causes.
Assume that we do not have access to a joint environment $J$ that contains all variables of interest, namely $(\textbf{PA}_Y, Y)$. Instead, we have: 
\begin{itemize}
    \item A source environment with observed variables $(\mathbf{X}_s, Y)$, and
    \item A target environment with observed variables $\mathbf{X}_t$ and the (unobserved) variable $Y$. 
\end{itemize}

\subsection{Theorem \ref{thm:impossibility}}
\label{sec:proof-impossibility}
\subsubsection{Single-variate statement}
\impossibility*
\subsubsection{Multivariate statement}
\begin{restatable}{theorem}{impossibility}
\label{thm:impossibility-multivariate}
Consider the OOV scenarios illustrated in Fig.~\ref{fig:Impossibility_Scenarios_multivariate}, each governed by the structural causal model described in \cref{sec:problem_formulation_multivariate}. Suppose that the variables considered in Fig.~\ref{fig:marginal-to-marginal-multivariate} and Fig.~\ref{fig:marginal-to-joint-multivariate} are real-valued and the variables $X_{s\backslash o}$ and $X_{t \backslash o}$ in Fig.~\ref{fig:merge-datasets-multivariate} are binary. We assume that for all $i \in \{s \backslash o, o, t \backslash o\}$, the marginal density $p_i(\mathbf{x}_i)$ is known, and denote its support set as $S_{i} := \{\mathbf{x} \in \mathbb{R}^n \mid p_i(\mathbf{x}) > 0 \}$. Suppose that for all $i$ there exist two distinct points $\mathbf{x}, \mathbf{x}' \in S_i$. Then, for any pair $f_s, f_t$ satisfying marginal consistency (\ref{eqref:joint-to-marginal}) and for any $R>0$, there exists another function $f_t'$ with $\Vert f_t - f_t' \Vert_2 \geq R$ that also satisfies marginal consistency. 
\end{restatable}

\begin{figure}
     \begin{adjustbox}{minipage=\linewidth,scale=0.65}
    \begin{subfigure}[t]{0.3\linewidth}
      \centering
         \begin{tikzpicture}
            \node[latent] at (0, 0) (x1) {$\mathbf{X}_{s\backslash o}$};
            \node[latent] [below=.5cm of x1] (x2) {$\mathbf{X}_o$};
            \node[latent] [below=.5cm of x2] (x3) {$\mathbf{X}_{t \backslash o}$};
            \node[latent] [right=of x2] (y) {$Y$};
            \path[->] (x1) edge (y);
a            \path[->] (x2) edge (y);
            \path[->] (x3) edge (y);
            \plate [draw=darkmidnightblue, line width = .6mm, inner sep = 0.2cm] {source} {(x1)(x2)(y)}{};
            \plate [draw=burntorange, line width = .6mm, inner sep = 0.2cm, xshift = -.2cm] {target} {(x3)(x2)(y)}{};
         \end{tikzpicture}
         \caption{Marginal to Marginal}
         \label{fig:marginal-to-marginal-multivariate}
    \end{subfigure}
  \hfill
    \begin{subfigure}[t]{.3\linewidth}
      \centering
         \begin{tikzpicture}
            \node[latent] at (0, 0) (x1) {$\mathbf{X}_{s \backslash o}$};
            \node[latent] [below=.5cm of x1] (x2) {$\mathbf{X}_o$};
            \node[latent] [below=.5cm of x2] (x3) {$\mathbf{X}_{t \backslash o}$};
            \node[latent] [right=of x2] (y) {$Y$};
            \path[->] (x1) edge (y);
            \path[->] (x2) edge (y);
            \path[->] (x3) edge (y);
            \plate [draw=darkmidnightblue, line width = .6mm, inner sep = 0.2cm] {source} {(x1)(x2)(y)}{};
            \plate [draw=burntorange, line width = .6mm, inner sep = 0.35cm, yshift = 0.3cm] {target} {(x1)(x3)(x2)(y)}{};
         \end{tikzpicture}
         \caption{Marginal to Joint}
         \label{fig:marginal-to-joint-multivariate}
    \end{subfigure}
  \hfill
    \begin{subfigure}[t]{.3\linewidth}
      \centering
         \begin{tikzpicture}
            \node[latent] at (0, 0) (x1) {$X_{s \backslash o}$};
            \node[latent] [below=.5cm of x1] (x2) {$\mathbf{X}_o$};
            \node[latent] [below=.5cm of x2] (x3) {$X_{t \backslash o}$};
            \node[latent] [right=of x2] (y) {$Y$};
            \path[->] (x1) edge (y);
            \path[->] (x2) edge (y);
            \path[->] (x3) edge (y);
            \plate [draw=darkmidnightblue, line width = .6mm, inner sep = 0.2cm] {source} {(x1)(x2)(y)}{};
            \plate [draw=darkmidnightblue, line width = .6mm, inner sep = 0.15cm, xshift = -.2cm, yshift = .2cm] {target} {(x3)(x2)(y)}{};
            \plate [draw=burntorange, line width = .6mm, inner sep = 0.5cm, yshift = 0.3cm] {target} {(x1)(x3)(x2)(y)}{};
         \end{tikzpicture}
         \caption{Merge datasets}
         \label{fig:merge-datasets-multivariate}
    \end{subfigure}
    \end{adjustbox}
    \caption{Examples of OOV scenarios where marginal consistency condition alone (\ref{eqref:joint-to-marginal}) does not permit the identification of the optimal predictive function in the corresponding target domain.}
    \label{fig:Impossibility_Scenarios_multivariate}
\end{figure}

 \subsubsection{Proof}
\begin{proof}
    Proof by construction. Consider the scenario illustrated in Fig.~\ref{fig:marginal-to-joint-multivariate}. Find two distinct values in the support set of $p_{t\backslash o}(\mathbf{x}_{t\backslash o})$, $\mathbf{x}_{t\backslash o}'$ and $\mathbf{x}_{t\backslash o}''$.  For some appropriate $\epsilon > 0$, consider their neighbourhoods as $N_1 = [\mathbf{x}_{t\backslash o}' - \epsilon, \mathbf{x}_{t\backslash o}' + \epsilon]$ and $N_2 = [\mathbf{x}_{t\backslash o}'' - \epsilon, \mathbf{x}_{t\backslash o}'' + \epsilon]$ and $N_1 \cap N_2 = \emptyset$. Suppose we learnt the optimal predictive function in the source environment $f_s$, it can be written as:
    \begin{align}
    \label{eq:fa-consistency}
    f_s(\mathbf{x}_{s \backslash o}, \mathbf{x}_o) & = \int_\Omega \phi(\mathbf{x}_{s \backslash o}, \mathbf{x}_o, \mathbf{x}_{t\backslash o}) p_{t \backslash o}(\mathbf{x}_{t \backslash o}) d(\mathbf{x}_{t\backslash o}) \\
    & = \underbrace{\int_{(\Omega \backslash N_1) \backslash N_2} \phi(\mathbf{x}_{s \backslash o}, \mathbf{x}_o, \mathbf{x}_{t\backslash o}) p_{t \backslash o}(\mathbf{x}_{t \backslash o}) d(\mathbf{x}_{t\backslash o})}_{\text{Remainder}(\mathbf{x}_{s \backslash o}, \mathbf{x}_o)} \\
    & + \underbrace{\int_{N_1}\phi(\mathbf{x}_{s \backslash o}, \mathbf{x}_o, \mathbf{x}_{t\backslash o}) p_{t \backslash o}(\mathbf{x}_{t \backslash o}) d(\mathbf{x}_{t\backslash o})}_{g(\mathbf{x}_{s \backslash o}, \mathbf{x}_o)} + \underbrace{\int_{N_2}\phi(\mathbf{x}_{s \backslash o}, \mathbf{x}_o, \mathbf{x}_{t\backslash o}) p_{t \backslash o}(\mathbf{x}_{t \backslash o}) d(\mathbf{x}_{t\backslash o})}_{h(\mathbf{x}_{s \backslash o}, \mathbf{x}_o)}
\end{align}
Denote the integral in the region excluding the specified neighbourhoods as $\text{Remainder}(\mathbf{x}_{s \backslash o}, \mathbf{x}_o)$, the integral over $N_1$ as $g(\mathbf{x}_{s \backslash o}, \mathbf{x}_o)$, and that over $N_2$ as $h(\mathbf{x}_{s \backslash o}, \mathbf{x}_o)$. 
Given any function $c(\mathbf{x}_{s \backslash o}, \mathbf{x}_o)$, it is easy to find a function $d(\mathbf{x}_{s \backslash o}, \mathbf{x}_o)$ such that $f_s(\mathbf{x}_{s \backslash o}, \mathbf{x}_o) - \text{Remainder}(\mathbf{x}_{s \backslash o}, \mathbf{x}_o) = c(\mathbf{x}_{s \backslash o}, \mathbf{x}_o)g(\mathbf{x}_{s \backslash o}, \mathbf{x}_o) + d(\mathbf{x}_{s \backslash o}, \mathbf{x}_o) h(\mathbf{x}_{s \backslash o}, \mathbf{x}_o)$. This means whenever we find a function $\phi$ that satisfies Equation \ref{eq:fa-consistency}, it is always possible to slightly perturb $\phi$ such that $\phi'$ can also satisfy marginal consistency. For example, construct a $\phi'$ which is a result of proposed $\phi$ scaled by $c$ elementwise over the neighbourhood $N_1$ and scaled by $d$ elementwise over the neighbourhood $N_2$. Moreover, the deviation of $\phi'$ with $\phi$ can be arbitrarily different: 
\begin{align}
    || \phi - \phi'||_2 \geq \text{const}_1 ||c - 1 ||_2 + \text{const}_2 ||d - 1 ||_2 
\end{align}
which the lower bound can be arbitrarily large by the choice of $c(x_1, x_2)$. 

Consider the scenario illustrated in Fig.~\ref{fig:marginal-to-marginal-multivariate}. Given the information obtained from the source environment $p_{s \backslash o}(\mathbf{x}_{s \backslash o}), p_o(\mathbf{x}_o), f_s(\mathbf{x}_{s \backslash o}, \mathbf{x}_o)$, by argument above, we know it is always possible to perturb learnt $\phi$ appropriately to get $\phi'$ that satisfies the desired marginal consistency conditions. We will show it will also be impossible to identify the optimal predictive function $f_t$ in the target environment. By the argument above, we can choose the function $c(\mathbf{x}_{s \backslash o}, \mathbf{x}_o)$ freely. Then there always exists a function $c$ such that $\text{sgn}(c(\mathbf{x}_{s \backslash o}, \mathbf{x}_o)) = \text{sgn}(\phi(\mathbf{x}_{s \backslash o}, \mathbf{x}_o, \mathbf{x}_{t \backslash o}'))$ and $|c(\mathbf{x}_{s \backslash o}, \mathbf{x}_o)| \geq L, \forall \mathbf{x}_{s \backslash o}, \mathbf{x}_o$, where $L > 1$. Consider a point $\mathbf{x}_{t \backslash o}'$ in the neighbourhood $N_1$. Then under a learnt $\phi$ and perturbed $\phi'$, its corresponding optimal predictive function $f_t$ and $f_t'$ can be written as:
\begin{align}
     f_t(\mathbf{x}_o, \mathbf{x}_{t\backslash o}') & =  \int_\Omega \phi(\mathbf{x}_{s \backslash o}, \mathbf{x}_o, \mathbf{x}_{t \backslash o}') p_{s\backslash o}(\mathbf{x}_{s\backslash o}) d(\mathbf{x}_{s\backslash o}) \\
     f_t'(\mathbf{x}_o, \mathbf{x}_{t\backslash o}') & =  \int_\Omega \underbrace{c(\mathbf{x}_{s \backslash o}, \mathbf{x}_o)\phi(\mathbf{x}_{s \backslash o}, \mathbf{x}_o,\mathbf{x}_{t \backslash o}')}_{\geq 0 \ \text{and} \ \neq \phi} p_{s\backslash o}(\mathbf{x}_{s\backslash o}) d(\mathbf{x}_{s \backslash o})
\end{align}
This implies $f_t(\mathbf{x}_o, \mathbf{x}_{t\backslash o}')  \neq f_t'(\mathbf{x}_o, \mathbf{x}_{t\backslash o}') $ for all values in the neighbourhood $N_1$. This means though $f_t$ and $f_t'$ are both marginally consistent with $f_s$ (since $\phi$ and $\phi'$ are both consistent with $f_s$), but they are different functions. Moreover their difference can be arbitrarily large: 
\begin{align}
\label{eq:deviation-fig2a}
|| f_t - f_t'||_2 \geq \int \int_{N_1} (f_t(\mathbf{x}_o, \mathbf{x}_{t\backslash o}') - f_t'(\mathbf{x}_o, \mathbf{x}_{t\backslash o}'))^2 p_{t \backslash o}(\mathbf{x}_{t \backslash o}) d\mathbf{x}_{t\backslash o} p_o(\mathbf{x}_o)d\mathbf{x}_o
\end{align}
If analyse the inner term that squared, we have:
\begin{align*}
(f_t(\mathbf{x}_o, \mathbf{x}_{t\backslash o}') - f_t'(\mathbf{x}_o, \mathbf{x}_{t\backslash o}'))^2 &= (\int_\Omega (c-1) \phi(\mathbf{x}_{s \backslash o}, \mathbf{x}_o, \mathbf{x}_{t \backslash o}) p_{s \backslash o}(\mathbf{x}_{s \backslash o}) d\mathbf{x}_{s \backslash o})^2  \\
&\geq (|L|-1)^2 (\int_\Omega \phi p_{s \backslash o}(\mathbf{x}_{s \backslash o}) d\mathbf{x}_{s \backslash o})^2 
\end{align*}
The last inequality holds by construction of $c$. Thus substituting it into Eq. \ref{eq:deviation-fig2a}, we have $||f_t - f_t'||_2 \geq  \text{const}*(|L|-1)^2$, where the lower bound of the constructed function $c$ can be arbitrarily large. 

Consider the scenario illustrated in Fig. \ref{fig:merge-datasets-multivariate}. Here we restrict to cases when $X_{s\backslash o}$ and $X_{t \backslash o}$ contains singleton binary variables. For ease of notation, denote $X_{s\backslash o}$ as $X_1$, $X_{t \backslash o}$ as $X_3$ and $\mathbf{X}_o$ as $\mathbf{X}_2$. Set $\gamma_i := P(X_i = 0)$. Then given source environments where one observes variables $(X_1, \mathbf{X}_2, Y)$ and the other observes variables $(\mathbf{X}_2, X_3, Y)$. The potential generating function must satisfy below system of equations:
\begin{align}
    f_s(0, \mathbf{x}_2) &= \gamma_3 \phi(0, \mathbf{x}_2, 0) + (1-\gamma_3) \phi(0, \mathbf{x}_2, 1) \\
    f_s(1, \mathbf{x}_2) &= \gamma_3 \phi(1, \mathbf{x}_2, 0) + (1-\gamma_3) \phi(1, \mathbf{x}_2, 1) \\
    f_t(\mathbf{x}_2, 0) &= \gamma_1\phi(0, \mathbf{x}_2, 0) + (1-\gamma_1) \phi(1, \mathbf{x}_2, 0) \\
    f_t(\mathbf{x}_2, 1) &= \gamma_1 \phi(0, \mathbf{x}_2, 1) + (1 -\gamma_1) \phi(1, \mathbf{x}_2, 1)
\end{align}

Perturb $\phi(0, \mathbf{x}_2, 0)$ by $c(0, \mathbf{x}_2, 0)$, then in order to still satisfy the above system of equations, the coefficients need to be correspondingly adjusted as:
\begin{align}
    c(0, \mathbf{x}_2, 1) & = \frac{f_s(0, \mathbf{x}_2) - \gamma_3c(0, \mathbf{x}_2, 0)\phi(0, \mathbf{x}_2, 0)}{(1-\gamma_3) \phi(0, \mathbf{x}_2, 1)} \\
    c(1, \mathbf{x}_2, 1) &= \frac{f_t(\mathbf{x}_2, 1) - \frac{\gamma_1f_t(0, \mathbf{x}_2) - \gamma_1 \gamma_3c(0, \mathbf{x}_2, 0)\phi(0, \mathbf{x}_2, 0) }{(1-\gamma_3)}}{(1-\gamma_1) \phi(1, \mathbf{x}_2, 1)} \\
    c(1, \mathbf{x}_2, 0) &= \frac{f_s(1, \mathbf{x}_2) - \frac{(1-\gamma_3)f_t(\mathbf{x}_2, 1) - \gamma_1f_s(0, \mathbf{x}_2) + \gamma_1 \gamma_3c(0, \mathbf{x}_2, 0)\phi(0, \mathbf{x}_2, 0)}{1-\gamma_1}}{\gamma_3 \phi(1, \mathbf{x}_2, 0)}
\end{align}
Note we have the adjusted coefficients are consistent with each other: 
\begin{align}
    (1-\gamma_1)c(1, \mathbf{x}_2, 0)\phi(1, \mathbf{x}_2, 0) &= \frac{1}{\gamma_3} \big[(1 - \gamma_1)f_s(1, \mathbf{x}_2) -  (1-\gamma_3)f_t(\mathbf{x}_2, 1) \\
    &+ \gamma_1f_s(0, \mathbf{x}_2) - \gamma_1 \gamma_3c(0, \mathbf{x}_2, 0)\phi(0, \mathbf{x}_2, 0) \big]\\
    &= f_t(\mathbf{x}_2, 0) - \gamma_1 c(0, \mathbf{x}_2, 0)\phi(0, \mathbf{x}_2, 0)
\end{align}
Thus it is possible to find a new $\phi'$ such that it still satisfies the system of equations. More over $\phi'$ deviates from $\phi$ arbitrarily large by the choice of $c(0, \mathbf{x}_2, 0)$: 
\begin{align}
    ||\phi - \phi'||_2 \geq ||\phi(0, \mathbf{x}_2, 0) - c(0, \mathbf{x}_2, 0) \phi(0, \mathbf{x}_2, 0) ||_2 \geq ||c-1||_2 * \text{const}
\end{align}

\end{proof}

\subsection{Theorem \ref{thm:possibility}}
\label{sec:proof-possibility}
\subsubsection{Single-variate statement}
\possibility*
\subsubsection{Multivariate statement}
\begin{restatable}{theorem}{possibility}
\label{thm:possibility-multivariate}
    Consider a target variable $Y$ and its direct causes $\textbf{PA}_Y$. 
    Suppose that we observe: 
    \begin{itemize}
        \item source environment contains variables $(\mathbf{X}_s, Y)$; training a discriminative model on this environment yields $f_{s}(\mathbf{x}_s) = \mathbb{E}[Y\mid \mathbf{X}_s]$,
        \item target environment contains variables $\mathbf{X}_t = \textbf{PA}_Y$
    \end{itemize}
    Suppose $Y:= \phi(\textbf{PA}_Y) + \epsilon_Y$ and $\mathbf{X}_s =g(\mathbf{X}_t) + \mathbf{\epsilon}_s$ where $g$ is known and invertible with $\phi, g^{-1}$ are uniformly continuous and $\mathbf{\epsilon}_s \indep \mathbf{X}_t$. Then in the limit of $\mathbb{E}[|\mathbf{\epsilon}_s|] \to 0$, the composition of $f_s \circ g$ approaches the optimal predictor, i.e., $\forall \mathbf{x}_t: f_{s} \circ g(\mathbf{x}_t) \to \phi(\mathbf{x}_t)$.  
\end{restatable}

 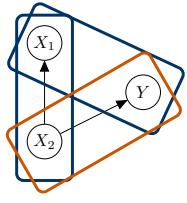
\begin{figure}
\resizebox{2.5cm}{!}{
     \begin{tikzpicture}
        \node[latent] at (0, 0) (x1) {$X_1$};
        \node[latent] at (0, -2) (x2) {$X_2$};
        \node[latent] at (2, -1) (y) {$Y$};
        \path[->] (x2) edge (y);
        \path[->] (x2) edge (x1);
        \plate [draw=darkmidnightblue, line width = .6mm, inner sep = 0.2cm] {source} {(x1)(x2)}{};
        \plate[draw=darkmidnightblue, rotate = 155, inner sep = .3cm, inner ysep= -2.5mm, yshift = -.1cm, line width = .6mm] {source} {(x1)(y)} {};
        \plate [draw=burntorange, rotate = 30, inner sep = 0.3cm, inner ysep= -2.5mm, line width = .6mm] {target} {(x2)(y)}{};
     \end{tikzpicture}}
     \caption{An example of the scenarios considered in Theorem \ref{thm:possibility}}
     \label{fig:transfer-learning-multi-sources}
 \end{figure}
\subsubsection{Proof}
\begin{proof}
We first observe the optimal predictive function in the target environment coincides with the true generating function, written as: $f_t(\textbf{PA}_Y) = \mathbb{E}[Y \mid \textbf{PA}_Y] = \phi(\textbf{PA}_Y)$. Further, $g(\mathbf{X}_t) = \mathbb{E}[\mathbf{X}_s \mid \mathbf{X}_t] $ due to $\mathbf{\epsilon}_s \indep \mathbf{X}_t$. Given $g$ is continuous and invertible, its inverse $g^{-1}$ exists and is continuous. 

\begin{align}
    \mathbb{E}[Y \mid \mathbf{x}_s] &= \mathbb{E}\big[\mathbb{E}[Y \mid \mathbf{x}_s, \mathbf{X}_t]\big] \\
    &= \mathbb{E}_{\textbf{PA}_Y \mid \mathbf{x}_s}\big[\mathbb{E}[Y \mid \textbf{PA}_Y]\big] \\
    &= \mathbb{E}_{\textbf{PA}_Y \mid \mathbf{x}_s}\big[\phi(\textbf{PA}_Y)\big]
\end{align}

Assume $\phi, g^{-1}$ is uniformly continuous. Then $\phi g^{-1}$ is uniformly continuous, i.e., for any $\delta \in \mathbb{R}$, there exists $\gamma$, such that for any $x \in \mathbb{R}$, 
$$
|\phi g^{-1}(x + \delta) - \phi g^{-1}(x)| \leq \gamma |\delta|
$$
Then, 
\begin{align}
    \mathbb{E}_{\textbf{PA}_Y \mid \mathbf{x}_s}\big[\phi(\textbf{PA}_Y)\big] &=     \mathbb{E}_{\textbf{PA}_Y \mid \mathbf{x}_s}\big[\phi g^{-1}(\mathbf{x}_s - \mathbf{\epsilon}_s)\big]
\end{align}
By the uniform continuity of $\phi g^{-1}$, 
$|\mathbb{E}[Y \mid \mathbf{x}_s] - \phi g^{-1}(\mathbf{x}_s)| = \lvert \mathbb{E}_{\textbf{PA}_Y \mid \mathbf{x}_s}\big[\phi g^{-1}(\mathbf{x}_s - \mathbf{\epsilon}_s) - \phi g^{-1}(\mathbf{x}_s) \big]\rvert \leq  \mathbb{E}_{\textbf{PA}_Y \mid \mathbf{x}_s} \big[ \lvert \phi g^{-1}(\mathbf{x}_s - \mathbf{\epsilon}_s) - \phi g^{-1}(\mathbf{x}_s) \rvert \big] \leq \gamma \mathbb{E}[|\mathbf{\epsilon}_s|]$. 
In the limit of $\mathbb{E}[|\epsilon_s|] \to 0$, the result follows.  
\end{proof}

\subsection{Theorem \ref{thm:conditionalskew}}
\label{sec:conditional-skew-proof}
\subsubsection{Single-variate statement}
\conditionalskew*

\subsubsection{Multivariate statement}
\begin{theorem}
\label{thm:conditionalskew-multivariate}
Consider the problem setup in \cref{sec:problem-setup}, and assume the function $\phi$ be a $2$-times continuously differentiable function at the point $\mu_{\textbf{PA}_Y}:= \mathbb{E}[\textbf{PA}_Y]$. Suppose from the source environment we learn a function $f_s(\mathbf{x}_s) = \mathbb{E}[Y \mid \mathbf{x}_s]$. Denote the $r$-th central moment of $X_i$ as $C_i^r = \mathbb{E}[(X_i - \mu_i)^r]$. Using first-order multivariate Taylor approximation on the function $\phi: \mathbf{x}_s \times \mathbf{X}_{t \backslash o} \rightarrow \mathbb{R}$, denoted as $\phi \big|_{\mathbf{x}_s}$ and suppose $\mathbf{X}_{t \backslash o}$ have dimension $m$, the moments of the residual distribution in the source environment take the form, where $f = \phi\big|_{\mathbf{x}_s}$ for ease of notation, 
\begin{align}
     \mathbb{E}[(Y - f_s(\mathbf{x}_s))^n \mid \mathbf{x}_s] 
     &=  \sum_{k=0}^{n}{n \choose k}\mathbb{E}[\epsilon_Y^k] \times \bigg[ \sum_{k_1 + k_2 + \dots + k_m =n-k; k_1, k_2, \dots, k_m \geq 0} \\
     &{n \choose k_1, k_2, \dots, k_m} \prod_{i=1}^m (\frac{\partial f}{\partial x_i}(\mathbf{a}))^{k_i} C_i^{k_i} \bigg], \quad \text{where } \mathbf{a} = \mu_{\mathbf{x}_{t\backslash o}}
\end{align}
\text{When} $n=3$:
\begin{align}
\label{eq:third_moment_conditional_skew}
     \mathbb{E}[(Y - f_s(\mathbf{x}_s))^3 \mid \mathbf{x}_s] &=  \sum_{i=1}^m (\frac{\partial f}{\partial x_i}(\mathbf{a}))^{3} C_i^{3} + \mathbb{E}[\epsilon_Y^3]
\end{align}
\end{theorem}

\subsubsection{Proof}
\textbf{Notation} Let $\lvert \alpha \rvert = \sum_i \alpha_i, \alpha! = \prod_i \alpha_i !, \mathbf{x}^\alpha = \prod_i x_i^{\alpha_i}$ for $\alpha \in \mathbb{N}^n$ and $\mathbf{x} \in \mathbb{R}^n$. Denote $$D^\alpha f = \frac{\partial^{\lvert \alpha \rvert} f }{\partial x_1^{\alpha_1} \dots \partial x_n^{\alpha_n}}$$ as higher order partial derivatives of $f$.

\begin{theorem}[Multivariate version of Taylor's theorem \citep{spivak08}]
\label{thm:multivariate_taylor}
    Let $f: \mathbb{R}^n \to \mathbb{R}$ be a $k$-times continuously differentiable function at the point $\mathbf{a} \in \mathbb{R}^n$. Then there exist functions $h_\alpha: \mathbb{R}^n \to \mathbb{R}$, where $\lvert \alpha \rvert = k$, such that 
    \begin{align}
        &f(\mathbf x)= \sum_{|\alpha | \leq k}\frac{D^\alpha f(\mathbf{a})}{\alpha!}(\mathbf{x} - \mathbf{a})^{\alpha} + \sum_{\lvert \alpha \rvert = k}h_\alpha(\mathbf x)(\mathbf x - \mathbf a)^{\alpha}, \\
        &\text{and} \quad \lim_{\mathbf{x} \to \mathbf{a}}h_\alpha(\mathbf{x}) = 0
     \end{align}
\end{theorem}
\begin{proof}
Let $\mathbf{x}$ be a sample from random variable $\mathbf{X}$. Let $\mathbf{a} = \mathbb{E}[\mathbf{X}]$.
Theorem \ref{thm:multivariate_taylor} states that 
$$
f(\mathbf{x}) = f(\mathbf{a}) + \sum_{\lvert \alpha \rvert = 1} Df(\mathbf{a})(\mathbf{x} - \mathbf{a})+ \sum_{\lvert \alpha \rvert = 2}h_\alpha(\mathbf{x})(\mathbf{x} - \mathbf{a})^2
$$

Take first-order Taylor approximation over the generating function, we suppose $f(\mathbf{x}) \approx f(\mathbf{a}) + Df(\mathbf{a}) (\mathbf{x} - \mathbf{a})$, $\forall \mathbf{x}$. Taking expectations:
\begin{equation}
    \mathbb{E}[f(\mathbf{X})] \approx f(\mathbf{a})
\end{equation}
Consider the difference between $f(x)$ and its expectations and raise it to the power of $n$, we have:
\begin{align}
\label{eq:pre-expectations}
    \bigg(f(\mathbf{X}) - \mathbb{E}[f(\mathbf{X})]\bigg)^n &= \bigg( \sum_{\lvert \alpha \rvert = 1}D^\alpha f(\mathbf{a}) (\mathbf{X} - \mathbf{a})^{\alpha} \bigg)^n \\
    &= \bigg( \sum_{i=1}^m \frac{\partial f}{\partial x_i}(\mathbf{a})(X_i - a_i) \bigg)^n
\end{align}
Taking expectations, on Eq.~\ref{eq:pre-expectations}, and let $\mathbf{X} \in \mathbb{R}^m$, we have:
\begin{align}
\label{eq:general-taylor}
    \mathbb{E}\bigg[\big(f(\mathbf{X}) - \mathbb{E}[f(\mathbf{X})]\big)^n\bigg] = \sum_{k_1 + k_2 + \dots + k_m =n; k_1, k_2, \dots, k_m \geq 0} {n \choose k_1, k_2, \dots, k_m} \prod_{i=1}^m (\frac{\partial f}{\partial x_i}(\mathbf{a}))^{k_i} \mathbb{E}\big[(X_i-a_i)^{k_i}\big]
\end{align}
The expectation is taken inside the product term as the covariates are independent of each other. 
Take $f_{\mathbf{x}_s}: \mathbf{X}_{t \backslash o} \rightarrow \mathbb{R}$ to be the function $\phi: \mathbf{x}_s \times \mathbf{X}_{t \backslash o} \rightarrow \mathbb{R}$ where values $\mathbf{x}_s$ are fixed.
Since $Y = \phi(\mathbf{x}_s, \mathbf{x}_{t\backslash o}) + \epsilon$, we have 
\begin{align}
    \mathbb{E}[(Y - f_s(\mathbf{x}_s))^n \mid \mathbf{x}_s] &=  \mathbb{E}\bigg[\big(f(\mathbf{x}) + \epsilon - \mathbb{E}[f(\mathbf{x})]\big)^n\bigg] \\
    &= \sum_{k=0}^n {n \choose k}\mathbb{E}[\epsilon^k] \mathbb{E}[(f(\mathbf{x}) - \mathbb{E}[f(\mathbf{x})])^{n-k}]
\end{align}
where $f(\mathbf{x}) = f_{\mathbf{x}_s}(\mathbf{x}_{t\backslash o})$. 
The second equality is due to independence of $\epsilon$ and $f(x) - \mathbb{E}[f(x)]$.  Let $C_i^r$ denotes the $r$-th central moment of $X_i$ where $C_i^r := \mathbb{E}[(X_i - \mu_i)^r]$. Substitute in Eq.~\ref{eq:general-taylor}, we have
\begin{align}
     \mathbb{E}[(Y - f_s(\mathbf{x}_s))^n \mid \mathbf{x}_s] 
     &=  \sum_{k=0}^n {n \choose k}\mathbb{E}[\epsilon_Y^k] \times \bigg[ \sum_{k_1 + k_2 + \dots + k_m =n-k; k_1, k_2, \dots, k_m \geq 0} \\
     &{n \choose k_1, k_2, \dots, k_m} \prod_{i=1}^m (\frac{\partial f}{\partial x_i}(\mathbf{a}))^{k_i} C_i^{k_i} \bigg]
\end{align}
When $n=3$:
\begin{align}
     \mathbb{E}[(Y - f_s(\mathbf{x}_s))^3 \mid \mathbf{x}_s] &=  \sum_{i=1}^m (\frac{\partial f}{\partial x_i}(\mathbf{a}))^{3} C_i^{3} + \mathbb{E}[\epsilon_Y^3]
\end{align}
\end{proof}

\subsection{Corollary \ref{corollary:exact_identification}}
\exactidentification*
\subsubsection{Proof}
\begin{proof}
   When $\phi(x_1, x_2) = \sum_{p, q} c_i h(x_1, x_2)^{p}x_3^{q}$, where $p, q \in \{0, 1\}$ and $c_i \in \mathbb{R}, \forall i$ and $h$ can be any function. Note 
   $$
   f_s(x_1, x_2) = c_1 + c_2x_3 + c_3h(x_1, x_2) + c_4h(x_1, x_2)x_3
   $$
   By Theorem \ref{thm:conditionalskew-multivariate}, when $m=1, n=3$
   \begin{align}
     \mathbb{E}[(Y - f_s(x_1, x_2))^3 \mid x_1, x_2] &=  \big(\frac{\partial \phi}{\partial x_3} \big|_{x_1, x_2, \mu_3} \big)^{3} C_3^{3} + \mathbb{E}[\epsilon_Y^3] \\
    \frac{\partial \phi}{\partial x_3} \big|_{x_1, x_2, \mu_3} &= c_2 + c_4 h(x_1, x_2)
\end{align}
Then $\phi(x_1, x_2, x_3) = f_s(x_1, x_2) + \frac{\partial \phi}{\partial x_3} \big|_{x_1, x_2, \mu_3} *(x_3 - \mu_3)$, where $f_s$ estimable from the source environment, and the partial derivative estimable from the residual error distribution and $\mu_3$ estimable from the covariates in the target environment. 

\end{proof}

\subsection{Extensions to more than one unobserved variable}
\label{sec:more-than-single-unobserved}
Here, we consider a function with two variables $f(x, y)$ where variables can be considered as two unobserved variables from the source environment. Note, the same argument can easily extend to multivariate functions. Let $\mathbb{E}[X] = \mu_x, \mathbb{E}[Y] = \mu_y$. Expand multivariate Taylor approximations around the point $\mathbf{a} = (\mu_x, \mu_y)$, we have:
\begin{align}
    f(x, y) = &f(\mu_x, \mu_y) + \frac{\partial f}{\partial x}\big|_{\mathbf{a}}(x-\mu_x) +  \frac{\partial f}{\partial y}\big|_{\mathbf{a}}(y-\mu_y) \\ 
    &+ C_1 (x-\mu_x)^2 + C_2 (x-\mu_x)(y-\mu_y) + C_3 (y-\mu_y)^3
\end{align}
With first-order Taylor approximations, we ignore the higher order terms. Taking expectations on both sides, we have $\mathbb{E}[f(x, y)] = f(\mu_x, \mu_y)$. Similarly, 
\begin{align}
    \big( f(x, y) - \mathbb{E}[f(x, y)] \big)^n &= \big( \frac{\partial f}{\partial x}\big|_{\mathbf{a}}(x-\mu_x) +  \frac{\partial f}{\partial y}\big|_{\mathbf{a}}(y-\mu_y) \big)^n \\
    & = \sum_{k=0}^n {n \choose k} \big( \frac{\partial f}{\partial x}\big|_{\mathbf{a}} \big)^k(x-\mu_x)^k  \big( \frac{\partial f}{\partial y}\big|_{\mathbf{a}} \big)^{n-k}(y-\mu_y)^{n-k}
\end{align}
Taking expectations on both sides, assuming we can estimate the cross-moments between two unobserved variables from data, with two unknowns and two equations, we can estimate the unknowns. 

\section{Further Experimental Details}
\label{sec:experimental-details}

\subsection{Implementation details}
In the implementation of the mean imputed predictor, we first impute the missing variable $X_3$ with its mean and train a source predictor $f_s$ from $X_1, X_2, \mathbb{E}[X_3]$. During inference, given a target sample $(x_2, x_3)$, $\text{MeanImputed}(x_2, x_3) = f_s(\mu_1, x_2, x_3)$ where $\mu_1 := \mathbb{E}[X_1]$. 

In the implementation of the marginal predictor, we first train a source predictor $f_s$ with inputs $X_1, X_2$. During inference, given target sample $(x_2, x_3)$, $\text{Marginal}(x_2, x_3) = \sum_{x_{1, i}} f_s(x_{1, i}, x_2)$.

For all our training, we employ a 2-layer MLP with ReLU activation function. All MLPs are trained to minimize mean squared error loss using SGD. For the Monte Carlo approximation in our proposed \textit{MomentLearn} we sample $1,000$ observations of $X_1$ from the source environment. 

\subsection{Hyperparameter Sweep}
We have performed a hyperparameter sweep for a total of 8 variations where learning rate varies in $(0.01, 0.001)$, hidden sizes in the range of $(64, 32)$ and the number of epochs in the range of $(30, 50)$. Table 
\ref{table:systematic} shows the averaged results over the 8 variations. 
\subsection{Algorithm}
\label{sec:algorithm}
Below we detail the exact algorithm for performing OOV learning in our base model illustrated in Fig. \ref{fig:transfer-learning-partial-parents}. Note that, we train two neural networks: one to estimate the conditional mean in the source environment $f_s$, and the other to estimate the partial derivative from modelling the third moment of the residual distributions. We use 2-layer MLPs with ReLU activation function with hidden size 64 and output size 1. We train with batch size 64, learning rate $0.01$ with weight decay $1e^{-4}$. We train the conditional mean estimator for $10$ epochs and the partial derivative estimator for $50$ epochs. We perform Monte Carlo estimation using $1000$ samples. We sample our data ensuring that the coefficients for the missing variable are large enough, i.e., $|\alpha_3| > 2(|\alpha_2| + |\alpha_1|)$ for performance analysis and sample efficiency experiment. Otherwise, we sample coefficients from a standard normal distribution with mean 0 and variance 1 for the systematic analysis experiment. 

\begin{algorithm}[H]
    \SetKwInOut{Input}{Input}
    \SetKwInOut{Output}{Output}
    \Input{Source environment $\mathcal{E}_S$ with variables $X_1$, $X_2$ and $Y$; Target environment $\mathcal{E}_t$ with variables $X_2$ and $X_3$.}
    \Output{OOV predictive function $\tilde{f}_t(x_2, x_3)$}
    \textbf{Step 1}: Learn $\mathbb{E}[Y \mid X_1, X_2]$ \\
    Train a neural network $f_s$ via minimizing its mean squared error $||Y - f_s(x_1, x_2)||_2^2$ \\
    \textbf{Step 2}: Learn partial derivative $h_\theta$ from modelling conditional skew \\
    Compute $Z = (Y - f_s(X_1, X_2))^3$. \\
    Estimate the skew of $X_3$: $k_3 = \mathbb{E}[(X_3 - \mu_3)^3]$, where $\mu_3 = \mathbb{E}[X_3]$. \\
    Train a neural network $h_\theta$ via minimizing $||Z - k_3h_\theta(x_1, x_2)^3||_2^2$\\
    \textbf{Step 3}: Monte Carlo Estimation \\
    Uniformly sample $n$ observations of $X_1$ from environment $\mathcal{E}_S$: $\{x_{1, i}\}_{i=1}^n$. \\
    For fixed $x_2, x_3$, calculate the proposed zero-shot estimate in Eq.~\ref{eqref:proposed-method}. 
    \caption{Out-of-variable learning}
    \label{alg:Zero-shot-learning}
\end{algorithm}

\subsection{Real world experiment}
To illustrate the applicability of OOV generalization in real world dataset, we use "mtcars" dataset extracted from 1974 Motor Trend US magazine. Given the small dataset size, we first augmented the dataset through resampling with replacement to reach $232$ data points. We are interested in predicting the outcome variable $Y$ miles per gallon (MPG) given variables on the car's information. In the source environment,  we observed the number of cylinders $X_1$ and quarter-mile time (acceleration) $X_2$ and miles per gallon $Y$. In the target environment, we observe covariates quarter-mile time $X_2$ and weight of the car $X_3$. We are interested in leveraging observation from the source environment to yield a better prediction on the target covariates without observation of the outcome in the target environment. Averaged over $10$ random seeds, Table \ref{tab:real-world} shows the zero-shot prediction for our method and various benchmarks. 

\begin{table}[ht]
\caption{Our method's (``MomentLearn'') OOV prediction performance in the target environment, compared to the ``Marginal'' baseline and the predictor that imputes missing variable with its mean (``Mean Imputed''). Shown
are mean and standard deviations of the MSE loss between the predicted and observed target values on augmented 'Mtcars' dataset.}
\begin{center}
\begin{tabular}{*{2}{c}}
    \toprule
    {} & Mtcars\\ [.5ex]  
    MomentLearn  & 1.09 $\pm$ 0.08 \\[1ex]  
    MeanImputed  & 1.48 $\pm$ 0.06\\[1ex] 
    Marginal  & 1.46 $\pm$ 0.03 \\ [1ex]    
    \hline
\end{tabular}
\end{center}
\label{tab:real-world}
\end{table}

\subsection{Robustness with different noise scale}
To understand the robustness of our method with changing noise level, we vary the standard deviation of Gaussian noise (with mean $0$) in the range of $\sigma = [0.01, 0.2, 0.4, 0.6, 0.8, 1.0]$. For each noise setting, we repeat the experiment for $5$ random seeds and take the average of MSE loss for each predictor. Shown are mean and standard deviations of the MSE loss between the predicted and observed target values in Table \ref{tab:robustness_w_noise}. We observe MomentLearn outperforms the other baselines for almost all cases. 
\begin{table}[ht]
\caption{Under changing noise level where noise sampled from Gaussian distribution with varying standard deviation $\sigma$, our method's (``MomentLearn'') OOV prediction performance in the target environment, compared to the ``Marginal'' baseline, the predictor that imputes missing variable with its mean (``Mean Imputed'') and the solution that has access to the full joint observations on the target domain (``Oracle''). Shown are mean and standard deviations of the MSE loss between the predicted and observed
target values.}
\begin{center}
\begin{tabular}{*{7}{c}}
    \toprule
    {} & \multicolumn{6}{c}{$\sum_i \alpha_i X_i$}\\ \cmidrule(r){2-7}  
    {} & $\sigma=0.01$ & $\sigma=0.2$ & $\sigma=0.4$ & $\sigma=0.6$& $\sigma=0.8$ & $\sigma=1.0$  \\ \cmidrule(r){2-7}  
    Oracle & 0.37 $\pm$ 0.18 & 0.23 $\pm$ 0.15 & 0.28 $\pm$ 0.13 & 0.83 $\pm$ 0.38 & 0.80 $\pm$ 0.14 & 1.05 $\pm$ 0.14 \\ [1ex]
    MomentLearn  & 0.36 $\pm$ 0.17 & 0.25 $\pm$ 0.16 & 0.32 $\pm$ 0.15 & 0.90 $\pm$ 0.36 & 0.86 $\pm$ 0.12 & 1.13 $\pm$ 0.21 \\[1ex]  
    MeanImputed  & 0.68 $\pm$ 0.52 & 0.38 $\pm$ 0.22 & 0.34 $\pm$ 0.18 & 0.94 $\pm$ 0.52 & 0.96 $\pm$ 0.16 & 1.49 $\pm$ 0.68 \\[1ex] 
    Marginal  & 0.76 $\pm$ 0.64 & 0.42 $\pm$ 0.25 & 0.36 $\pm$ 0.21 & 0.88 $\pm$ 0.45 & 1.01 $\pm$ 0.20 & 1.61 $\pm$ 0.85 \\ [1ex]  
    \hline
    {} & \multicolumn{6}{c}{$+\sum_{i<j} \beta_{ij} X_i X_j$}\\ \cmidrule(r){2-7}  
    {} & $\sigma=0.01$ & $\sigma=0.2$ & $\sigma=0.4$ & $\sigma=0.6$& $\sigma=0.8$ & $\sigma=1.0$   \\ \cmidrule(r){2-7}  
    Oracle & 0.26 $\pm$ 0.22 & 0.33 $\pm$ 0.23 & 0.53 $\pm$ 0.28 & 0.52 $\pm$ 0.08 & 0.85 $\pm$ 0.32 & 1.10 $\pm$ 0.07\\ [1ex]
    MomentLearn  & 0.33 $\pm$ 0.21 & 0.79 $\pm$ 0.89 & 0.55 $\pm$ 0.29 & 0.73 $\pm$ 0.43 & 1.05 $\pm$ 0.37 & 1.38 $\pm$ 0.33 \\[1ex]  
    MeanImputed  & 0.45 $\pm$ 0.31 & 0.82 $\pm$ 0.50 & 1.21 $\pm$ 1.03 & 1.02 $\pm$ 0.84 & 1.10 $\pm$ 0.39 & 1.39 $\pm$ 0.23 \\[1ex] 
    Marginal  & 0.54 $\pm$ 0.43 & 0.87 $\pm$ 0.46 & 1.42 $\pm$ 1.10 & 1.14 $\pm$ 1.12 & 1.16 $\pm$ 0.42 & 1.46 $\pm$ 0.19 \\ [1ex]  
    \hline
    {} & \multicolumn{6}{c}{$+\sum_i \gamma_i X_i^2 $}\\ \cmidrule(r){2-7}  
    {} & $\sigma=0.01$ & $\sigma=0.2$ & $\sigma=0.4$ & $\sigma=0.6$& $\sigma=0.8$ & $\sigma=1.0$  \\ \cmidrule(r){2-7} 
    Oracle & 1.05 $\pm$ 1.33 & 0.44 $\pm$ 0.43 & 0.51 $\pm$ 0.27 & 0.68 $\pm$ 0.13 & 1.31 $\pm$ 0.78 & 1.74 $\pm$ 0.41 \\ [1ex]
    MomentLearn  & 1.39 $\pm$ 1.47 & 0.70 $\pm$ 0.61 & 0.67 $\pm$ 0.33 & 1.04 $\pm$ 0.32 & 1.68 $\pm$ 1.15 & 1.84 $\pm$ 0.47 \\[1ex]  
    MeanImputed  & 1.41 $\pm$ 1.22 & 0.80 $\pm$ 0.53 & 0.99 $\pm$ 0.35 & 1.42 $\pm$ 0.70 & 1.44 $\pm$ 0.80 & 1.99 $\pm$ 0.24 \\[1ex] 
    Marginal  & 1.62 $\pm$ 1.19 & 0.86 $\pm$ 0.57 & 1.01 $\pm$ 0.37 & 1.39 $\pm$ 0.53 & 1.57 $\pm$ 0.84 & 2.08 $\pm$ 0.27 \\[1ex]
    \hline
\end{tabular}
\end{center}
\label{tab:robustness_w_noise}
\end{table}

\subsection{Robustness with heavy tailed}
To understand the robustness of our method with non-Gaussian noise, we sample noise from lognormal distribution with mean $0$ and $\sigma = 0.5$ and repeat the experiment for $5$ times averaged over a hyperparameter sweep. We see a decrease in performance for our method as expected by Theorem \ref{thm:conditionalskew} due to the entanglment of noise skew with the signal skew. Table \ref{table:robustness_heavy_tailed} shows the detailed result. \looseness=-1
\begin{table} \caption{Under heavy tailed noise sampled from lognormal distribution with $\mu = 0$ and $\sigma = 0.5$, our method's (``MomentLearn'') OOV prediction performance in the target environment, compared to the ``Marginal'' baseline, the predictor that imputes missing variable with its mean (``Mean Imputed'') and the solution that has access to the full joint observations on the target domain (``Oracle''). Shown are mean and standard deviations of the MSE loss between the predicted and observed
target values. $\mathcal{GP}_i(\cdot)$ denotes a function sampled from a Gaussian Process with zero mean and Gaussian kernel. 
MomentLearn performs as expected by our theoretical results and even exhibits a degree of robustness to function classes that are not covered by Theorem \ref{thm:conditionalskew}.} 
\begin{adjustbox}{width=0.8\columnwidth}
\begin{tabular}{*6c}
\toprule
    {} & {$\sum_i \alpha_i X_i$} & {$+\sum_{i<j} \beta_{ij} X_i X_j$} & {$+\sum_i \gamma_i X_i^2 $} \\ [0.5ex] \midrule 
    Oracle  & 0.69 $\pm$ 0.14 & 0.63 $\pm$ 0.43 & 0.93 $\pm$ 0.33  \\ [1ex] 
    MomentLearn  & 0.91 $\pm$ 0.34 & 0.97 $\pm$ 0.50 & 1.20 $\pm$ 0.30 \\[1ex]  
    MeanImputed  & 0.89 $\pm$ 0.31 & 1.00 $\pm$ 0.53 & 1.76 $\pm$ 0.86 \\[1ex] 
    Marginal  & 0.93 $\pm$ 0.23  & 0.90 $\pm$ 0.45 & 2.11 $\pm$ 1.30    \\ [1ex]  
        \midrule
     {} & {$\mathcal{GP}_1(X_1, X_2) + \alpha_3 X_3$} & {$+ \mathcal{GP}_2(X_1, X_2) \cdot X_3$} & {$+ \mathcal{GP}_3(X_1, X_2) \cdot X_3^2 $} \\ [0.5ex] \midrule 
    Oracle  & 0.39 $\pm$ 0.03  & 0.44 $\pm$ 0.08 & 0.54 $\pm$ 0.16 \\ [1ex]
    MomentLearn  & 0.82 $\pm$ 0.35 & 0.72 $\pm$ 0.21 & 1.40 $\pm$ 0.66 \\[1ex]  
    MeanImputed  & 0.85$\pm$ 0.42 & 0.78 $\pm$ 0.41 & 1.31 $\pm$ 0.95 \\[1ex] 
    Marginal  & 0.65 $\pm$ 0.15 & 0.79 $\pm$ 0.39 & 1.81 $\pm$ 1.22  \\ [1ex]  
    \bottomrule
\end{tabular}
\end{adjustbox}
\label{table:robustness_heavy_tailed}
\end{table}

\section{Discussion}
\subsection{More Environments}
\label{sec:more_environments}
To understand how multi-environments could in some cases help the OOV problem, recall Theorem \ref{thm:possibility} where the dependence structure among covariates is assumed to be known. Such an assumption can be replaced with a realistic scenario where we observe all input variables for the source and target environments in another environment and thus estimate $g$ through learning in this environment. If additional environments contain covariates unique to the target environment and covariates in the source environment, such information is in general helpful. One can thus learn their functional relationship and impute with the estimated value to achieve a more accurate predictor in the target environment.  

\subsection{Assumptions}
To facilitate a full understanding of our theorems, we provide a bullet list of assumptions required and discuss their implications and robustness to their violations. Here we focus on uni-variate discussion and multivariate extension is easy to generalize. 

Theorem \ref{thm:conditionalskew} presents an analytical formula on how the moments of the residual distribution relate to transferable signals (partial derivative in Eq. \ref{eq:analytical_exact}), moments of the out-of-variable, and noise effect. Assumptions involved are:
\begin{itemize}
    \item continuous covariates $\mathbf{X}$ are causes of the outcome variable $Y$ and $Y = \phi(\mathbf{X}) + \epsilon$
    \item $\phi$ is everywhere twice-differentiable with respect to the out-of-variable $X_3$
\end{itemize}

Corollary \ref{corollary:exact_identification} presents an identification result on when our method ''MomentLearn'' can achieve perfect transferring ability. Assumptions involved are:
\begin{itemize}
    \item continuous covariates $\mathbf{X}$ are causes of the outcome variable $Y$ and $Y = \phi(\mathbf{X}) + \epsilon$
    \item $\phi$ satisfies $\{\phi | \phi(\mathbf{x}) = \sum_{p, q} c_{p, q} h(x_1, x_2)^p x_3^q, p, q \in \{0, 1\}, c_i \in \mathbb{R}, \forall h \}$
    \item noise $\epsilon$ is symmetric
\end{itemize}

\subsubsection{Robustness to violations of assumptions}
\label{sec:robustness}
\textbf{Causal assumptions} We study the OOV problem under a causal framework. As discussed in Section \cref{sec:oov-generalization}, while there is nothing causal about the OOV problem, we utilize structural causal model to study cases that provably exhibit OOV generalization. If no knowledge about the graph is available, then things can go arbitrarily wrong. E.g., the relationship of $X_3$ can be arbitrarily related to the target variable and this cannot be inferred from the source environment unless further assumptions are made (as indicated in Section \ref{sec:dependent_covariates}). It is conceivable that results could be obtained in broader settings, e.g., if the target covariates have relationships with the source covariates in a more complex causal graph, partial information may be recoverable, though it is out of scope for the current paper. 

\textbf{Robustness to function class} We performed systematic analysis on how our method performs with respect to different function classes in Section \ref{sec:experiments} with results record in Table \ref{table:systematic}. We observe ''MomentLearn'' performs as expected by our theoretical results and even exhibits a degree of robustness to function classes that are not guaranteed as in Corrolary \ref{corollary:exact_identification}. 

\textbf{Robustness to noise} We perform further experimental analysis on how our method performs when the standard deviation of Gaussian noise increases and when the noise is asymmetric (e.g., follows a log-normal distribution). Table \ref{tab:robustness_w_noise} and \ref{table:robustness_heavy_tailed} records the result. We observe ''MomentLearn'' performs as expected by our theoretical results: consistently outperforms other baselines facing Gaussian noise with increasing noise levels but deteriorates in performance when noise is asymmetric.

\subsection{OOD vs. OOV and its applications}
\label{sec:ood_oov_applications}
Here we provide a brief discussion on OOD and OOV's relationship and ground the OOV problem in potential real-world applications: 

Under no distribution shift, problems can exhibit the need to generalize OOV. This is evident in real-world scenarios as datasets are often inconsistent. For example, consider two medical labs collecting different sets of variables. Lab A collects $X_{1}= $ lifestyle factors and $X_2 = $ blood test; Lab B, in addition to $X_2$, collects $X_{3} = $ genomics. Lab A is hospital-based with data capacity, whereas lab B is research-focused. The OOV problem asks: given a model trained to predict the likelihood of a disease $Y$ on Lab A’s data, how should Lab B use this model for its own dataset that differs in the set of input variables? Situations as described often happen in the real-world (e.g. hospitals, consumer industries) as different institutions have imbalanced resources to collect data and have a different market focus which reflects on the type of variables collected. 

Problems exhibit distribution shifts may also be due to hidden OOV problems. \cite{guo2022causal} provides theoretical evidence that exchangeable sequences of causal observations (i.e., a set of causal observations that come from different distributions and satisfy exchangeability) can be equivalently modelled as a set of identical distributions conditioned on latent variables. One may thus interpret distribution shifts as a lack of knowledge of the latent variable. In practice, for example, different treatment effects on patients may be due to unobserved variables idiosyncratic to individual patients. \looseness=-1

Often in real-world applications, problems exhibit both OOD and OOV. For example, to assess the effect of a policy, decision-makers need to synthesize information from multiple sources containing different variables, and account remaining randomness as distribution shifts for risk measure. To effectively tackle real world problems, with the power of AI, we believe one need to solve both OOD and OOV problems. We envision this work is conceptually novel, explicating the capability of generalization is intricately related to the knowledge of variables and their relationships. We think this is likely to trigger significant follow-up work. 

\end{document}